\documentclass{article} 
\usepackage{iclr2026_conference,times}

\usepackage{amsmath,amsfonts,bm}

\def\eqref#1{equation~\ref{#1}}

\def\1{\bm{1}}

\DeclareMathAlphabet{\mathsfit}{\encodingdefault}{\sfdefault}{m}{sl}
\SetMathAlphabet{\mathsfit}{bold}{\encodingdefault}{\sfdefault}{bx}{n}

\newenvironment{proof}{\par\noindent\textbf{Proof.}\ }{\hfill$\square$\par}

\usepackage{hyperref}
\usepackage{url}
\usepackage{graphicx}
\usepackage{booktabs}

\newtheorem{remark}{Remark}
\newtheorem{theorem}{Theorem}

\usepackage{bm}
\usepackage{amsmath}
\usepackage{amssymb}
\usepackage{mathtools}
\usepackage{caption}
\usepackage{algorithm}
\usepackage{algorithmic}
\usepackage{multirow}
\usepackage{hhline}
\usepackage{makecell}
\usepackage{colortbl}  
\usepackage{wrapfig}
\usepackage{comment}

\title{Toward Enhancing Representation Learning in Federated Multi-Task Settings}

\author{Mehdi Setayesh, Mahdi Beitollahi, Yasser H. Khalil, and Hongliang Li \\
Huawei Noah’s Ark Lab, Montreal, Canada\\
\small  \begin{tabular}{@{}l@{\quad}l@{}} 
    \texttt{mehdi.setayesh1@huawei.com} & \texttt{mahdi.beitollahi@h-partners.com} \\
    \texttt{yasser.khalil1@huawei.com} & \texttt{hongliang.li2@huawei.com}
  \end{tabular}\vspace{-0.2cm}
}

\iclrfinalcopy 
\begin{document}

\maketitle

\begin{abstract}
Federated multi-task learning (FMTL) seeks to collaboratively train customized models for users with different tasks while preserving data privacy. Most existing approaches assume model congruity (i.e., the use of fully or partially homogeneous models) across users, which limits their applicability in realistic settings. To overcome this limitation, we aim to learn a shared representation space across tasks rather than shared model parameters. To this end, we propose \emph{Muscle loss}, a novel contrastive learning objective that simultaneously aligns representations from all participating models. Unlike existing multi-view or multi-model contrastive methods, which typically align models pairwise, Muscle loss can effectively capture dependencies across tasks because its minimization is equivalent to the maximization of mutual information among all the models' representations. Building on this principle, we develop \emph{FedMuscle}, a practical and communication-efficient FMTL algorithm that naturally handles both model and task heterogeneity. Experiments on diverse image and language tasks demonstrate that FedMuscle consistently outperforms state-of-the-art baselines, delivering substantial improvements and robust performance across heterogeneous settings.
\end{abstract}
\vspace{-0.2cm}
\section{Introduction}
\label{intro}
\vspace{-0.2cm}
Federated learning (FL) enables a group of users to collaboratively train models while preserving privacy by not sharing their local data~\citep{konevcny2016federated, abdulrahman2020survey}. Most conventional FL algorithms assume that users share the same model architecture~\citep{smith2017federated, li2020federated,t2020personalized,li2021ditto} or perform the same task~\citep{diao2020heterofl,alam2022fedrolex,zhu2022resilient,setayesh2023perfedmask}, which limits their applicability in diverse real-world settings~\citep{cai2023many,pFedClub2024}. As illustrated in Figure \ref{fig:sysmodel}, practical FL scenarios may involve users with heterogeneous model architectures and tasks, each using a task-specific local dataset. For example, with recent advances in foundation models (FMs), users can select a pre-trained FM based on their resource constraints and task requirements, and then fine-tune it using a local dataset~\citep{bommasani2021opportunities,shao2024foundation,zheng2025learning}. Task heterogeneity in FL scenarios has motivated the development of various federated multi-task learning (FMTL) algorithms~\citep{park2021federated,he2022spreadgnn,chen2023fedbone,lu2024fedhca2,jia2024fedlps}.\par

Early works in FMTL focused on personalization tasks (i.e., non-IID data across users) and aimed to learn a customized model with the same architecture for each user~\citep{smith2017federated,marfoq2021federated}. Recent studies have addressed a wider range of tasks by dividing users' models into a shared encoder and task-specific predictors~\citep{jia2024fedlps,lu2024fedhca2}. As a result, most existing FMTL algorithms assume that users employ fully or partially homogeneous model architectures. From a broader perspective, our central idea is that sharing model parameters, whether fully or partially, ultimately aims to establish a shared representation space for effective knowledge transfer among users' tasks. Therefore, the objective of an FMTL algorithm can be reframed as learning a shared representation space across tasks.\par

\begin{figure}[t]
  \centering
   \includegraphics[trim=1cm 0.45cm 3.2cm 0.4cm,clip,scale=0.30]{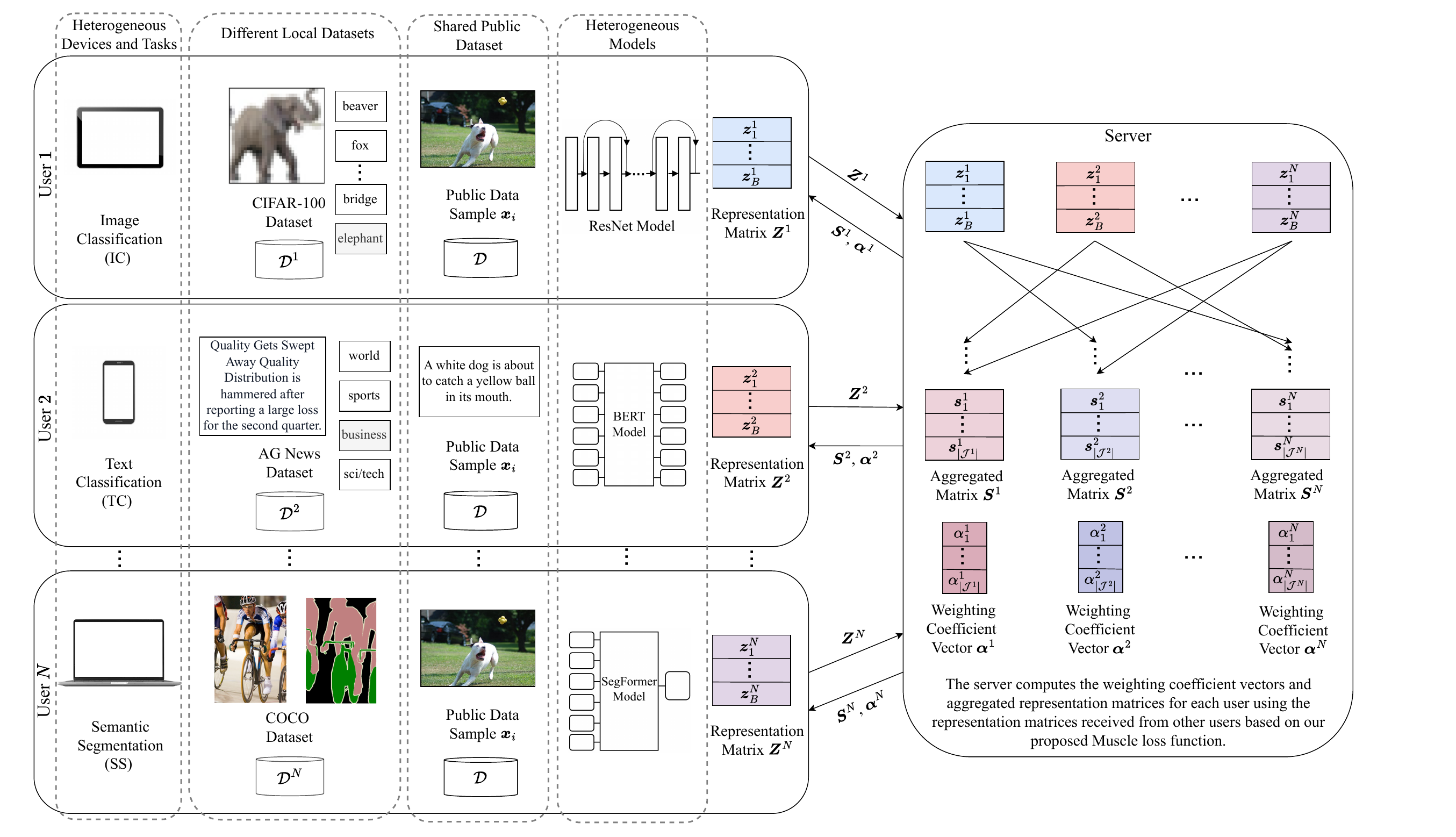}
   \caption{FedMuscle overview. Users may have heterogeneous models, tasks, and local datasets. A shared public dataset is used by all users to align the representation spaces of their models via the Muscle loss function.}
   \label{fig:sysmodel}
   \vspace{-0.4cm}
\end{figure}

Contrastive learning (CL) is a widely used technique for learning a shared representation space~\citep{he2020momentum,misra2020self,chen2020simple}. The core idea of CL is to bring similar instances, known as positives, closer together in the representation space while pushing dissimilar ones, known as negatives, farther apart~\citep{radford2021learning}. One popular CL loss function is InfoNCE~\citep{oord2018representation}, which aligns the representations of two models. For scenarios with more than two models, prior works typically adopt pairwise alignment approaches, applying the InfoNCE loss to each pair of models~\citep{yang2022mutual,girdhar2023imagebind,xue2024ulip}. However, pairwise alignment cannot effectively capture the dependencies among representations obtained from all models~\citep{koromilas2025principled}. To address this limitation, \citet{cicchetti2024gramian} recently proposed a Gramian-based contrastive loss that aligns all models’ representations simultaneously. Nevertheless, their approach lacks theoretical justification.\par

In this paper, we propose \textit{Muscle} (\textbf{Mu}lti-task/modal \textbf{S}ystematic \textbf{C}ontrastive \textbf{Le}arning), a novel CL loss function that effectively captures dependencies among the representations of all models in multi-model scenarios. This loss incorporates theoretically grounded weighting coefficients to emphasize more dissimilar negatives, an important mechanism absent in prior works. We show that minimizing the proposed Muscle loss is equivalent to maximizing a lower bound on the mutual information (MI) between the models' outputs, thereby enhancing knowledge transfer across tasks. Furthermore, we design a novel FMTL algorithm called \textit{FedMuscle} by leveraging the Muscle loss. As shown in Figure~\ref{fig:sysmodel}, in FedMuscle, users transmit their private knowledge on a shared public dataset to the server rather than sharing their local datasets or model parameters. Not revealing local model parameters provides an additional layer of privacy protection, especially when users employ pre-trained FMs~\citep{du2024privacy}. Upon receiving low-dimensional representations of public data from users, the server computes the weighting coefficients and aggregated representations that each user needs to align its representation space with those of others by minimizing the Muscle loss.\par

In summary, (1) we propose Muscle loss, a CL objective that applies to representations from any number of models and systematically captures dependencies among them. (2) We develop \mbox{FedMuscle}, a novel FMTL algorithm that addresses both model and task heterogeneity across users by aligning their representation spaces using Muscle loss. (3) We empirically validate the effectiveness of FedMuscle on various computer vision (CV) and natural language processing (NLP) tasks, demonstrating its superior performance over state-of-the-art baselines under settings with model and task heterogeneity. (4) We show that our proposed systematic CL framework can be seamlessly integrated into multi-modal FL algorithms, such as CreamFL~\citep{yu2023multimodal}, to enhance performance by replacing the heuristic knowledge distillation approach with Muscle loss.
\vspace{-0.2cm}
\section{Related Work}
\label{relwork}
\vspace{-0.2cm}
\paragraph{Federated Multi-Task Learning} Various FMTL algorithms have been proposed to address task heterogeneity in FL. For example, FeSTA~\citep{park2021federated}, FedBone~\citep{chen2023fedbone}, FedHCA$^2$~\citep{lu2024fedhca2}, FedLPS~\citep{jia2024fedlps}, and FedDEA~\citep{wei2025towards} have been introduced to enhance the generalization performance of all tasks by leveraging knowledge from related tasks. However, all of these FMTL algorithms assume model congruity, meaning that all users employ fully or partially homogeneous model architectures. Another line of research is federated semi-supervised learning~(FSSL)~\citep{jeong2020federated}, with algorithms such as CoFED~\citep{cao2023cross} and SAGE~\citep{liu2025mind}, where users collaboratively generate pseudo-labels for a public dataset. However, most FSSL approaches can only handle personalization tasks. 
\vspace{-0.3cm}
\paragraph{Model Heterogeneity in FL} Model heterogeneity in FL may arise from users' diverse and limited storage, communication, and computational capabilities~\citep{setayesh2023perfedmask}. Some existing works assume a global model architecture, where each user's local model parameters are a subset of the global model's parameters~\citep{diao2020heterofl,hong2022efficient}. However, such approaches limit the flexibility for users to choose arbitrary model architectures. To overcome this limitation, one potential solution is to adopt knowledge distillation~(KD)-based FL algorithms~\citep{li2019fedmd, park2024kernel}. For example, FCCL~\citep{huang2022learn}, KT-pFL~\citep{zhang2021parameterized}, and FedDF~\citep{lin2020ensemble} leverage KD and an unlabeled public dataset to address model heterogeneity. However, in these approaches, knowledge can only be transferred among models with the same logit size~(i.e., models associated with the same task). FedHeNN~\citep{makhija2022architecture} addresses this issue by introducing a proximal term into users' local loss functions. Centered kernel alignment (CKA) is used in FedHeNN to measure similarity among representation matrices. Nevertheless, the reliability of CKA as a similarity metric is still under investigation~\citep{davari2022reliability}.
\vspace{-0.3cm}
\paragraph{Representation Learning in FL} CL is a widely used technique for representation learning. Recently, incorporating CL into FL has shown promise in improving the performance of users' models. Federated SimCLR~\citep{louizos2024mutual} employs the SimCLR loss~\citep{chen2020simple} as the users’ local loss function. In MOON~\citep{li2021model}, each user applies CL to bring the representations obtained from its local model closer to those of the global model and farther from those of its previous local model. However, both Federated SimCLR and MOON assume homogeneous models across users. FedPCL~\citep{tan2022federated} and FedTGP~\citep{zhang2024fedtgp} utilize CL with a focus on class-wise prototypes, which makes them inapplicable to multi-task scenarios. multi-modal federated learning (MFL)~\citep{yu2023multimodal,qi2024adaptive,che2024leveraging,chen2024bridging} is another line of research that relies on representation learning. CreamFL~\citep{yu2023multimodal} is one of the most effective MFL algorithms, in which local contrastive regularization~(LCR) is applied at the users and global-local contrastive aggregation~(GCA) is performed at the server to learn a global model. Although the focus of our work is not on obtaining a global model, our experiments show that replacing LCR and GCA in CreamFL with our proposed systematic CL using the Muscle loss improves the performance of the global model on a multi-modal retrieval task.
\vspace{-0.2cm}
\section{Problem Formulation}
\label{sec: problem_setting}
\vspace{-0.2cm}
\paragraph{Notations} In this paper, we represent vectors by boldface lowercase letters (e.g., $\bm{z}$) and sets by calligraphic letters~(e.g., $\mathcal{Z}$). The cardinality of set $\mathcal{Z}$ and the number of elements in vector $\bm{z}$ are denoted by $\lvert \mathcal{Z} \rvert$ and $| \bm{z} |$, respectively. We define $[N]= \{1,2,\ldots,N\}$ and $\{\bm{z}_i\}_{i=1}^{N}=\{\bm{z}_1,\bm{z}_2,\ldots\bm{z}_N\}$. The cosine similarity between two normalized vectors $\bm{z}_i$ and $\bm{z}_{j}$ is given by $\bm{z}_i \cdot \bm{z}_j = \bm{z}_{i}^T \bm{z}_{j}$. 
\vspace{-0.2cm}
\paragraph{Problem Setting} We consider an FL setting with one server and $N$ users, where each user has a local, task-specific labeled dataset. Let $\mathcal{D}^n$ denote the local dataset of user~$n \in [N]$. Users may employ heterogeneous models based on their tasks and available resources (e.g., storage capacity and computational capability). Let $\bm{\theta}^{n}$ denote the parameters of user~$n$'s local model. The optimization problem $\min_{\bm{\theta}^{n}}\mathbb{E}[\mathcal{L}^{n}(\bm{\theta}^{n})]$ can be solved locally by each user~$n$ to find a customized model for its task, where $\mathcal{L}^{n}$ is the loss function corresponding to user~$n$'s task, and the expectation is taken over its labeled dataset $\mathcal{D}^n$. However, the key goal of FMTL is to enable users to train their local models under the orchestration of the server, thereby capturing common task-agnostic information. This coordination can improve the generalization capability of the users' models, leading to better performance compared to training solely on their local datasets. Unlike existing FMTL algorithms that employ a parameter-sharing mechanism (e.g., a shared encoder) to capture common information across tasks, we use joint representation learning among the users' models.\par

For each user $n \in [N]$, we decouple its local model $\bm{\theta}^{n}$ into a representation model $\bm{w}^{n}$ and a task-specific prediction head $\bm{\phi}^{n}$, such that $\bm{\theta}^{n} = \{\bm{w}^{n},\, \bm{\phi}^{n}\}$. We assume that the output features of all representation models have the same dimension~$d$\footnote{This requirement does not preclude model heterogeneity across users and can be relaxed by appending a lightweight, learnable projection head that maps each user’s latent representation to a common dimension.}. We also consider that users have access to a shared unlabeled public dataset. This public dataset may be uni-modal or multi-modal, depending on the users' tasks. Moreover, it can be easily obtained by collecting data from relevant domains~\citep{huang2022learn}, using publicly available datasets~\citep{yu2023multimodal}, or generating synthetic data samples~\citep{huang2024overcoming}. Let $\mathcal{D}$ denote the shared public dataset. In each communication round of FMTL, each user~$n$ first trains its local model $\bm{\theta}^{n}$ on its local dataset $\mathcal{D}^n$ for $E$ local epochs. Then, users extract representation vectors corresponding to samples from the public dataset and transmit them to the server. In particular, for each data sample $\bm{x}_i \in \mathcal{D}$, user~$n$ feeds it into $\bm{w}^{n}$ to obtain a normalized representation vector denoted by $\bm{z}^{n}_i \in \mathbb{R}^d$. Considering a batch of size $B$ from the public dataset, each user~$n$ transmits a representation matrix $\bm{Z}^{n} \in \mathbb{R}^{B \times d}$ to the server, where each row of $\bm{Z}^{n}$ corresponds to a sample in the batch. As illustrated in Figure~\ref{fig:sysmodel}, based on our proposed Muscle loss function (to be presented in Section~\ref{loss_section}), the server obtains an aggregated matrix and a weighting coefficient vector for each user by using the representation matrices received from the other users. The complete proposed FMTL algorithm is presented in Section~\ref{sec: fedmuscle}.

\vspace{-0.2cm}
\section{The Contrastive Learning Framework}
\label{loss_section}
\vspace{-0.2cm}
In this section, we first explain why pairwise alignment approaches may not fully capture the dependencies among the representations of more than two models. Next, we propose Muscle as a new CL loss and show that it can systematically capture such dependencies. Finally, we theoretically demonstrate the relationship between the Muscle loss and the MI among the models’ representations.
\vspace{-0.1cm}
\subsection{Preliminaries}
\label{InfoNCE_section}
\vspace{-0.1cm}
InfoNCE is a popular CL loss function, and its variants have been widely used in the literature to improve the quality of learned representations between two views of the same data~\citep{chen2020simple} or across two modalities~\citep{radford2021learning}. The idea behind the InfoNCE loss function is to learn a representation space that contrasts samples from \emph{two} different distributions. Given $\{\bm{z}^{n}_i\}_{i=1}^{B}$ and $\{\bm{z}^{m}_i\}_{i=1}^{B}$ as the representations corresponding to a batch of $B$ samples, $\bm{z}^{n}_i$ and $\bm{z}^{m}_j$ are considered positive pairs if $i = j$ (i.e., the representation vectors originate from the same data sample), and negative pairs if $i \neq j$. Without loss of generality, we treat $\bm{z}^{n}_i$ as the anchor. That is, we identify its positive and negative counterparts and define the loss function accordingly. The InfoNCE loss is given as follows~\citep{oord2018representation}:
\begin{align} \label{infoNCE_loss}
\mathcal{L}_{\textrm{InfoNCE}}^{n,m}(\bm{z}^{n}_i)=-\log{\frac{\exp(\bm{z}^{n}_i \cdot \bm{z}^{m}_i / \tau_{n,m})}{\sum_{j \in [B] }{\exp(\bm{z}^{n}_i \cdot \bm{z}^{m}_j / \tau_{n,m} )}}},
\end{align}
where $\tau_{n,m}$ is a temperature parameter that moderates the effect of similarity. To extend InfoNCE to more than two distributions (e.g., representations obtained from multiple modalities, views, or models), pairwise alignment approaches apply the InfoNCE loss to contrast samples from each pair of distributions~\citep{tian2020contrastive,yang2022mutual,wang2022transtab}. In particular, given $\bm{z}^{n}_i$ as the anchor, we have $\mathcal{L}_{\textrm{Pairwise}}^{n}(\bm{z}^{n}_i)= \sum_{m\in [N] \setminus \{n\}}\mathcal{L}_{\textrm{InfoNCE}}^{n,m}(\bm{z}^{n}_i)$. However, since $\mathcal{L}_{\textrm{Pairwise}}^{n}(\bm{z}^{n}_i)$ accounts only for pairwise dependencies, it cannot effectively capture dependencies among \emph{all} distributions or learn a representation space that contrasts their samples \emph{jointly}.

\vspace{-0.1cm}
\subsection{Our Proposed Muscle Loss Function}  
\vspace{-0.1cm}
We introduce the Muscle loss, which goes beyond pairwise alignment by jointly considering representations from multiple models. To this end, we focus on the $N$-tuple of representation vectors in learning representations across $N$ models. In particular, all representation vectors $\{\bm{z}^{m}_{i}\}_{m=1,\, m \neq n}^{N}$ are considered positives for the anchor representation vector $\bm{z}^{n}_{i}$. Also, any combination of representation vectors from all models in $[N] \setminus \{n\}$ (i.e., one from each model except model $n$) is considered a negative for the anchor vector $\bm{z}^{n}_{i}$ if at least one of the vectors in the combination corresponds to a data sample $j \neq i$. We define the Muscle loss function as follows:
\begin{align} \label{general_def}
\mathcal{L}_{\textrm{Muscle}}^{n}(\bm{z}^{n}_i)=-\log{\frac{f(\bm{z}^{n}_i,\, \{\bm{z}^{m}_i\}_{m=1,\, m \neq n}^{N})}{\sum_{\bm{j} \in \mathcal{J}^n }{f(\bm{z}^{n}_i,\, \{\bm{z}^{m}_{j_m}\}_{m=1,\, m \neq n}^{N})}}},   
\end{align}
where $f$ is a function that assigns high values to positive $N$-tuples and low values to negative $N$-tuples. For the sake of brevity, we define $\mathcal{J}^n = \left\{\bm{j} = ({j}_1,\ldots,{j}_N) \, | \, {j}_m \in [B],\, m \in [N] \setminus \{n\} \right\}$. In Appendix~\ref{sec: proof_loss}, we show that the optimal value of $f(\bm{z}^{n}_i, \{\bm{z}^{m}_{j_m}\}_{m=1,\, m\neq n}^{N})$ is proportional to the probability density ratio ${p(\bm{z}^{n}_i,\,\{\bm{z}^{m}_{j_m}\}_{m=1,\, m\neq n}^{N} )}/{p(\bm{z}^{n}_i) p({\{\bm{z}^{m}_{j_m}\}_{m=1,\, m \neq n}^{N} })}$, and that, the Muscle loss function can be expressed as follows:
\begin{align} \label{eq:new_Loss}
 \mathcal{L}_{\textrm{Muscle}}^{n}(\bm{z}^{n}_i)=
-\log{\frac{\alpha_{(i,\dots,i)} \exp({ \bm{z}^{n}_i \cdot \sum_{m \in [N] \setminus \{n\}}\bm{z}^{m}_i / \tau^{(N)}_{n,m} })}{ \sum_{\bm{j} \in \mathcal{J}^n } {\alpha_{\bm{j}} \exp({ \bm{z}^{n}_i \cdot \sum_{m \in [N] \setminus \{n\}} \bm{z}^{m}_{j_m} / \tau^{(N)}_{n,m} })}}},
 \end{align}
where {\small$\tau^{(N)}_{n,m}$} is a temperature parameter that moderates the effect of similarity between the representations of models $n$ and $m$, given the correlations among representations of all $N$ models. Additionally, we have $\alpha_{\bm{j}} = \exp({-\frac{1}{2}\sum_{m \in [N] \setminus \{n\}}  \sum_{m' \in [N] \setminus \{n,m\}} \textrm{\small$\gamma^{(N)}_{m,m'}$} \bm{z}^{m}_{j_m} \cdot \bm{z}^{m'}_{j_{m'}}})$, where {\small$ \gamma^{(N)}_{m,m'} = {1}/{\tau^{(N-1)}_{m,m'}}-{1}/{\tau^{(N)}_{m,m'}}$}. As shown in equation~(\ref{eq:new_Loss}), $\alpha_{\bm{j}}$, where $\bm{j} \in \mathcal{J}^n$, serves as a weighting coefficient in the Muscle loss function. These coefficients depend on the similarity among the representations of non-anchor models. In Appendix~\ref{dependence}, we show that the proposed Muscle loss captures dependencies among representations more effectively than pairwise alignment, owing to the use of these weighting coefficients.


\begin{remark}
 In Appendix~\ref{temp_rel}, we show that {\small$\tau^{(N)}_{n,m}>\tau^{(N-1)}_{n,m}$}, which implies that {\small$\gamma^{(N)}_{m,m'}$} in $\alpha_{\bm{j}}$ is always positive. Consequently, greater dissimilarity among the representation vectors $\{\bm{z}^{m}_{j_m}\}_{m=1,\,m \neq n}^{N}$ results in a larger weighting coefficient~$\alpha_{\bm{j}}$. This leads the Muscle loss function to place greater emphasis on increasing the dissimilarity between the anchor vector $\bm{z}^{n}_i$ and negatives that exhibit a higher dissimilarity among themselves. In Appendix~\ref{toy_exam}, we provide a simple example to illustrate the impact of the weighting coefficients $\alpha_{\bm{j}}$ in the Muscle loss function.
\end{remark}
The following theorem establishes the relationship between our proposed Muscle loss function and the MI among the models' representations.

\begin{theorem}
\label{Theo1}
Given $\mathcal{L}_{\textrm{Muscle}}^{n}(\bm{z}^{n}_i)$ in equation~(\ref{eq:new_Loss}), the mutual information $I(\bm{z}^{n}_i; \{\bm{z}^{m}_i\}_{m=1,\, m\neq n}^{N})$ is lower-bounded as follows:
\begin{align} \label{eq:lower_bound}
I(\bm{z}^{n}_i; \{\bm{z}^{m}_i\}_{m=1,\, m\neq n}^{N}) \geq (N-1)\log(B) - \mathbb{E} \mathcal{L}_{\textrm{Muscle}}^{n}(\bm{z}^{n}_i),
\end{align}
where $\mathbb{E}$ denotes the expectation over random batch of data samples.
\end{theorem}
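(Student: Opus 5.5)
We follow the standard InfoNCE lower-bound argument of \citet{oord2018representation}, adapted to $N$-tuples. The batch contributes $|\mathcal{J}^n| = B^{N-1}$ candidate tuples $\{\bm{z}^{m}_{j_m}\}_{m\neq n}$: one positive, $\bm{j}=(i,\dots,i)$, and $B^{N-1}-1$ negatives. We use the characterization from Appendix~\ref{sec: proof_loss}, namely that the optimal critic $f$ in equation~(\ref{general_def}) satisfies $f^{*}\propto p(\bm{z}^{n}_i,\{\bm{z}^{m}_{j_m}\}_{m\ne n})/\big(p(\bm{z}^{n}_i)\,p(\{\bm{z}^{m}_{j_m}\}_{m\ne n})\big)$, and that equation~(\ref{eq:new_Loss}) is the loss with $f$ in this parametric form. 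Since the expected loss of any critic is at least that of the optimal critic, $\mathbb{E}\mathcal{L}^{n}_{\textrm{Muscle}} \ge \mathbb{E}\mathcal{L}^{n,*}_{\textrm{Muscle}}$, and it suffices to lower-bound $\mathbb{E}\mathcal{L}^{n,*}$.

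To do so, write $\bm{u}_{\bm{j}}=\{\bm{z}^{m}_{j_m}\}_{m\neq n}$ and $r(\bm{u})=p(\bm{u}\mid\bm{z}^{n}_i)/p(\bm{u})$. Then
\begin{align*}
\mathbb{E}\mathcal{L}^{n,*} = \mathbb{E}\log\Big(1+\frac{\sum_{\bm{j}\ne(i,\dots,i)} r(\bm{u}_{\bm{j}})}{r(\bm{u}_{(i,\dots,i)})}\Big).
\end{align*}
Following the InfoNCE argument, we approximate the sum over negatives by $(B^{N-1}-1)\,\mathbb{E}_{\bm{u}\sim p(\bm{u})}[r(\bm{u})]=B^{N-1}-1$, which treats each negative tuple as distributed according to the product of marginals. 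This gives
\begin{align*}
\mathbb{E}\mathcal{L}^{n,*} \gtrsim \mathbb{E}\log\Big(1+\frac{p(\bm{u})}{p(\bm{u}\mid\bm{z}^{n}_i)}(B^{N-1}-1)\Big) \ge \mathbb{E}\log\Big(\frac{p(\bm{u})}{p(\bm{u}\mid\bm{z}^{n}_i)}B^{N-1}\Big) = (N-1)\log B - I\big(\bm{z}^{n}_i;\{\bm{z}^{m}_i\}_{m\ne n}\big).
\end{align*}
The second inequality uses $1+a(K-1)\ge aK$ for $a\le 1$; where $a>1$ we simply drop the term in the usual way, as in the original derivation. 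Rearranging yields equation~(\ref{eq:lower_bound}).

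\textbf{Main obstacle.} The crux is the step replacing the sum over negative tuples by its mean. A tuple with some $j_m=i$ and others $j_{m'}\ne i$ is only partially independent of $\bm{z}^{n}_i$, so it is not exactly a draw from $p(\bm{u})$, and the terms in the sum are correlated because they share batch samples.

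\textbf{Planned handling.} First, we rewrite the argument via the multi-sample (NWJ/Poole-style) bound $I\ge \mathbb{E}\big[\log\frac{f(\bm{z}^{n}_i,\bm{u}_{(i,\dots,i)})}{\frac{1}{B^{N-1}}\sum_{\bm{j}} f(\bm{z}^{n}_i,\bm{u}_{\bm{j}})}\big]$. This bound holds for any critic whenever the reference samples are exchangeable with the positive under the null. Second, we would check exchangeability with respect to the index tuple $\bm{j}$ under independent sampling of batch items. If exact exchangeability fails because of the mixed tuples, we would fall back to the asymptotic $B\to\infty$ approximation that the paper implicitly adopts, matching the level of rigor of \citet{oord2018representation}. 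The factor $(N-1)\log B$ arises precisely from the count $|\mathcal{J}^n|=B^{N-1}$.
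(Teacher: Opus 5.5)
Your argument follows the paper's own route. You substitute the density-ratio critic, rewrite the expected loss as $\mathbb{E}\log(1+\cdots)$, replace the sum over negative tuples by its mean via $\mathbb{E}[p(\bm u\mid\bm z^n_i)/p(\bm u)]=1$, and finish with an elementary log inequality. Two local deviations need attention.

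First, the paper replaces the sum by $|\mathcal{J}^n|=B^{N-1}$ copies of the mean (an off-by-one overcount), after which $\log(1+x)\ge\log x$ holds for every $x>0$. With your count $K-1$, $K=B^{N-1}$, the inequality $1+a(K-1)\ge aK$ really does fail for $a>1$, and ``drop the term in the usual way'' is not a justification. The step does hold in expectation. With $b=1/a$, concavity of $\log$ gives $\log(1+a(K-1))-\log(aK)=\log\bigl(\tfrac{K-1}{K}+\tfrac{b}{K}\bigr)\ge\tfrac{1}{K}\log b$, and over the positive pair $\mathbb{E}\log b=I(\bm z^n_i;\{\bm z^m_i\}_{m\neq n})\ge 0$.

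Second, the paper's step (a) is an \emph{equality}: it identifies the Muscle critic with the density ratio through the modeling assumption (\ref{cos_sim}). Your claim that every critic does at least as badly as $f^*$ is stronger, and it is false in this setting. The density ratio is not Bayes-optimal when negative tuples share batch samples, because a critic can use the fact that the positive is one of the $B$ diagonal tuples $(j,\dots,j)$.

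Your planned rigorous repair cannot work either. Under the null (anchor replaced by an independent copy), the mixed tuples are harmless, since they are no longer tied to the anchor. The real obstruction is that the tuples $\bm u_{\bm j}$ are in general exchangeable only under relabelings $\bm j\mapsto(\sigma(j_m))_m$ by a common $\sigma\in S_B$. Diagonal tuples collect representations of a single sample, and the Muscle critic detects this through $\alpha_{\bm j}$, which depends only on the non-anchor components. Restricting the denominator to the $B$ diagonal tuples, the multi-sample argument therefore certifies only $I\ge\log B-\mathbb{E}\mathcal{L}^n_{\textrm{Muscle}}(\bm z^n_i)$.

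The constant $(N-1)\log B$ is out of reach without extra assumptions. Take $N=3$, $n=1$, equal temperatures $\tau^{(3)}_{1,2}=\tau^{(3)}_{1,3}$, and $\bm z^1_i$ independent of everything, so $I=0$. Let $\bm z^3_j=-\bm z^2_j$ be uniform on $S^{d-1}$. Each diagonal term equals $e^{\gamma}$ with $\gamma=\gamma^{(3)}_{2,3}>0$, and each off-diagonal term tends to $1$ as $d\to\infty$. Hence $\mathbb{E}\mathcal{L}^1_{\textrm{Muscle}}(\bm z^1_i)\to\log\bigl(B+e^{-\gamma}(B^2-B)\bigr)<2\log B$. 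This violates (\ref{eq:lower_bound}), and it also shows the Muscle critic beating $f^*\equiv 1$.

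So falling back to the paper's ``$\thickapprox$'' step is not optional. At that heuristic level your proof coincides with the paper's, and no exchangeability argument will make the $(N-1)\log B$ bound rigorous.
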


\begin{proof}  
See Appendix~\ref{sec: proof}.
\end{proof}

\begin{remark}
Given inequality~(\ref{eq:lower_bound}), minimizing the Muscle loss is equivalent to maximizing a lower bound on $I(\bm{z}^{n}_i; \{\bm{z}^{m}_i\}_{m=1,\,m \neq n}^{N})$. Thus, the Muscle loss facilitates knowledge transfer from models $m \in [N] \setminus \{n\}$ to model $n$ by aligning the representations of model~$n$ with those of the other models. In Appendix~\ref{MI_dis}, we provide a discussion on why the Muscle loss is more effective for facilitating knowledge transfer among models compared to the pairwise alignment from the MI perspective.
\end{remark}

Our proposed Muscle loss function enables a systematic CL framework by incorporating weighting coefficients $\alpha_{\bm{j}}$. In the next section, we demonstrate how this systematic CL framework can be leveraged to develop a novel FMTL algorithm.

\begin{algorithm}[t]\footnotesize
   \caption{Training Procedure of FedMuscle}
   \label{alg: fedmuscle}
\begin{algorithmic}[1]
   \STATE {\bfseries Input:} Public dataset~$\mathcal{D}$; local dataset~$\mathcal{D}^n$, $n \in [N]$; number of local epochs $E$; number of communication rounds $R$; number of CL epochs~$T$; batch size $B$; number of selected representation matrices $M$. 
   \STATE {\bfseries Output:} A customized local model $\bm{\theta}^n$ for each user $n \in [N]$ on its task.
   \STATE Randomly initialize $\bm{\theta}^{n} = \{\bm{w}^{n},\, \bm{\phi}^{n}\}$ for each user~$n \in [N]$. 
   \FOR{each communication round $r \in [R]$}
   \FOR{each user $n \in [N]$ in parallel}
   \STATE $\begin{aligned}[t]
   &\{\bm{w}^{n},\, \bm{\phi}^{n}\}\!\leftarrow \!\textrm{\textbf{LocalUpdate}}(\bm{\theta}^n,\, \mathcal{D}^n,\, E)
   \end{aligned}$
   \FOR{each CL epoch $t \in [T]$}
   \FOR{each batch of data samples from $\mathcal{D}$}
   \STATE Obtain the representation matrix $\bm{Z}^{n}$ and send it to the server.
   \STATE Receive $\bm{S}^{n}$ and $\bm{\alpha}^{n}$ from the server, where $\bm{S}^{n}$ and $\bm{\alpha}^{n}$ are computed based on the $M$ representation matrices randomly selected by the server for user $n$.
   \STATE Update $\bm{w}^{n}$ by minimizing the CL loss function in (\ref{eq:user_loss}).
   \ENDFOR
   \ENDFOR
   \ENDFOR
   \ENDFOR
   \vspace{0.1cm}
   \STATE \textbf{function} $\textrm{LocalUpdate}(\bm{\theta}^n,\, \mathcal{D}^n,\, E)$
    \begin{ALC@g}
   \FOR{each local epoch $e \in [E]$}
    \STATE Update $\bm{\theta}^n$ by minimizing the loss function $\mathcal{L}^{n}$ corresponding to user~$n$'s task on $\mathcal{D}^n$.
   \ENDFOR
   \end{ALC@g}
\STATE \textbf{Return} $\{\bm{w}^{n},\, \bm{\phi}^{n}\}$
\end{algorithmic}
\end{algorithm}

\section{FedMuscle Methodology}
\label{sec: fedmuscle}
In this section, we propose FedMuscle, a novel FMTL algorithm designed to address both model and task heterogeneity across users. To achieve this, FedMuscle aligns the representation spaces of users' models using the Muscle loss function defined in~(\ref{eq:new_Loss}). As discussed in Section~\ref{sec: problem_setting}, given a batch of public data samples, each user $n \in [N]$ sends a representation matrix $\bm{Z}^{n}$ to the server. For each user $n$, the server computes an aggregated matrix and a weighting coefficient vector, which are then sent back to the user to compute the CL loss locally as follows:
{\small
\begin{align} \label{eq:user_loss}
 \mathcal{L}_{\textrm{CL}}^{n}=
-\frac{1}{B} \sum_{i \in [B]}  \log{\frac{\alpha^{n}_{(i,\dots,i)} \exp\left({ \bm{z}^{n}_i \cdot \bm{s}^{n}_{(i,\dots,i)} }\right)}{ \sum_{\bm{j} \in \mathcal{J}^n } {\alpha^{n}_{\bm{j}} \exp\left({ \bm{z}^{n}_i \cdot \bm{s}^{n}_{\bm{j}} }\right)}}},
 \end{align}
 }
 where, based on~(\ref{eq:new_Loss}), we have $\alpha_{\bm{j}}^{n} = \exp({-\frac{1}{2}\sum_{m \in [N] \setminus \{n\}}  \sum_{m' \in [N] \setminus \{n,m\}} \textrm{\small$\gamma^{(N)}_{m,m'}$} \bm{z}^{m}_{j_m} \cdot \bm{z}^{m'}_{j_{m'}}})$ and $\bm{s}^{n}_{\bm{j}} =   \sum_{m \in [N] \setminus \{n\}} \bm{z}^{m}_{j_m} / { \textrm{\small$  \tau^{(N)}_{n,m}  $ }}$. Note that, for each user $n$, the server can compute $\alpha_{\bm{j}}^{n}$ and $\bm{s}^{n}_{\bm{j}}$ for all $\bm{j} \in \mathcal{J}^n$ using the representation matrices $\bm{Z}^{m}$ received from the other users $m \in [N] \setminus \{n\}$. \par 

 Let $\bm{S}^{n}$ and $\bm{\alpha}^{n}$ denote the aggregated matrix and the weighting coefficient vector computed by the server for user $n$, respectively. The rows of $\bm{S}^{n}$ and the elements of $\bm{\alpha}^{n}$ correspond to $\bm{s}^{n}_{\bm{j}}$ and $\alpha_{\bm{j}}^{n}$ for all $\bm{j} \in \mathcal{J}^n$, respectively. Algorithm~\ref{alg: fedmuscle} summarizes the training procedure of FedMuscle. The users collaboratively train their local models over $R$ communication rounds. In each communication round $r \in [R]$, each user~$n \in [N]$ performs $E$ local epochs to update its model $\bm{\theta}^n$ using its local dataset $\mathcal{D}^{n}$. The users then perform $T$ CL epochs to update their representation models by aligning the representation spaces of their models. Specifically, in each communication round $r \in [R]$ and each CL epoch $t \in [T]$, each user $n$ sends the representation matrix $\bm{Z}^{n}$, obtained from a batch of the public dataset $\mathcal{D}$, to the server and receives the aggregated matrix $\bm{S}^{n}$ and the weighting coefficient vector $\bm{\alpha}^{n}$ from the server. The user then minimizes the CL loss function $\mathcal{L}_{\textrm{CL}}^{n}$ in equation~(\ref{eq:user_loss}).

For each user $n \in [N]$, the communication cost of the FedMuscle algorithm in the uplink direction depends on the size of the representation matrix $\bm{Z}^{n}$, which is $B \times d$. The communication cost in the downlink direction, however, depends on the sizes of $\bm{S}^{n}$ and $\bm{\alpha}^{n}$. If the server uses the representation matrices from all the other users, i.e., $[N] \setminus \{n\}$, to compute $\bm{S}^{n}$ and $\bm{\alpha}^{n}$, their dimensions are $B^{N-1} \times d$ and $B^{N-1}$, respectively. Note that the large sizes of $\bm{S}^{n}$ and $\bm{\alpha}^{n}$ may incur high communication costs in the downlink direction, as more parameters need to be transmitted from the server to user $n$, particularly when $N$ is large. To reduce communication costs in FedMuscle, we randomly select only $M$ of the available representation matrices from the $N-1$ users to compute $\bm{S}^{n}$ and $\bm{\alpha}^{n}$ for user $n$ in each communication round and CL epoch. Let $\mathcal{N}^n \subseteq [N] \setminus \{n\}$ denote the set of users whose representation matrices are selected to compute the elements of $\bm{S}^{n}$ and $\bm{\alpha}^{n}$ for user $n$, where $\lvert \mathcal{N}^n \rvert = M$. Thus, we have $\mathcal{J}^n = \left\{\bm{j} = ({j}_m)_{m\in \mathcal{N}^n} \, | \, {j}_m \in [B],\, m \in \mathcal{N}^n \right\}$ to specify the index tuples used in computing the terms of the CL loss function $\mathcal{L}_{\textrm{CL}}^{n}$ in equation~(\ref{eq:user_loss}). Consequently, in FedMuscle, we have $\bm{S}^{n} \in \mathbb{R}^{\lvert \mathcal{J}^{n} \rvert \times d}$ and $\bm{\alpha}^{n} \in \mathbb{R}^{\lvert \mathcal{J}^{n} \rvert}$, where $\lvert \mathcal{J}^{n} \rvert = B^{M}$.

\vspace{-0.2cm}
\section{Experiments} \label{sec_exp}
\vspace{-0.1cm}
\subsection{Experimental Setup}
\vspace{-0.1cm}
\paragraph{Datasets and Tasks} We conduct our experiments using datasets such as CIFAR-10, CIFAR-100~\citep{krizhevsky2009learning}, common objects in context (COCO)~\citep{lin2014microsoft}, Yahoo! Answers~\citep{zhang2015character}, Pascal VOC~\citep{everingham2015pascal}, and Flickr30K~\citep{plummer2015flickr30k}. Inspired by the experimental setup in~\citep{chen2020simple}, we consider the following three CV tasks: image classification using CIFAR-10~(IC10), image classification using CIFAR-100~(IC100), and multi-label classification using COCO~(MLC). We refer to this uni-modal setup as \textbf{Setup1}, which enables a fair comparison with existing baseline algorithms. To evaluate the performance of FedMuscle in a more challenging, multi-modal setting, we further expand Setup1 by adding two additional tasks: semantic segmentation using COCO (SS) and text classification using Yahoo! Answers (TC). We refer to this multi-modal setup as \textbf{Setup2}. For the uni-modal Setup1, we consider that the samples in the public dataset are entirely drawn from one of the following datasets: CIFAR-100, COCO, or Pascal VOC. For the multi-modal Setup2, we consider that the samples in the public dataset are entirely drawn from Flickr30K. In Setup2, users with CV tasks obtain representations from the images in the public dataset, while users with the TC task obtain representations from the image captions. The size of the shared public dataset is set to 5000. For additional details on the number of training and test samples for each task, please refer to Appendix~\ref{sec:data_exp}.
\vspace{-0.2cm}
\paragraph{Implementation Details} We use heterogeneous FMs as the users' local models. Specifically, our experiments involve heterogeneous ViT models~\citep{dosovitskiy2021an} pre-trained on ImageNet-21K~\citep{deng2009imagenet}, heterogeneous SegFormer models~\citep{xie2021segformer} pre-trained on ImageNet-1K and fine-tuned on ADE20K~\citep{zhou2017scene}, and heterogeneous BERT-based models~\citep{devlin2019bert, sanh2019distilbert} pre-trained on a large corpus of English text\footnote{Pre-trained FMs are downloaded from Hugging Face at \url{www.huggingface.co/models}.}. Details of the selected models and assigned tasks for users in Setup1 and Setup2 are provided in Appendix~\ref{sec:imp_exp}. We set $d$ (i.e., the output dimension of the representation models) to 256. To reduce the computational cost for users, we fine-tune their pre-trained FMs using parameter-efficient fine-tuning (PEFT)~\citep{zhang2023fedpetuning}. Specifically, we use low-rank adaptation~(LoRA)~\citep{hu2021lora} with a rank of 16. The models are trained using the AdamW optimizer~\citep{loshchilov2018decoupled} with a learning rate of 1e-3. We set the batch size $B$ to 32 and the value of $M$ in Algorithm~\ref{alg: fedmuscle} to 3. The temperature parameters {\small$\tau^{(4)}_{n,m}$} and {\small$\tau^{(3)}_{n,m}$} are set to 0.2 and 0.15, respectively. Unless otherwise stated, the number of local epochs $E$, communication rounds $R$, and CL epochs $T$ are set to 1, 150, and 1, respectively. We implement FedMuscle and the baseline algorithms using PyTorch~\citep{paszke2019pytorch}, and run all experiments on NVIDIA Tesla V100 GPUs.
\vspace{-0.2cm}
\paragraph{Baselines} We compare the performance of our proposed algorithm, FedMuscle, with FedRCL~\citep{seo2024relaxed} as a CL-based FL algorithm; SAGE~\citep{liu2025mind} and CoFED~\citep{cao2023cross} as FSSL algorithms; FedDF~\citep{lin2020ensemble} as a KD-based FL algorithm; and FedHeNN~\citep{makhija2022architecture}, a model-agnostic FL algorithm. Additionally, we compare FedMuscle with SimCLR~\citep{chen2020simple}, pseudo-labeling~\citep{cascante2021curriculum}, and local training, where each user's model is trained individually. Specifically, SimCLR and pseudo-labeling utilize the public dataset for self-supervised and semi-supervised learning, respectively, while local training relies solely on each user's labeled local dataset.

\vspace{-0.2cm}
\paragraph{Performance Metrics} For the IC10, IC100, and TC tasks, accuracy on the test samples is used as the performance metric. For the MLC and SS tasks, micro-average F1-score (micro-F1)~\citep{wu2017unified} and mean intersection over union (mIoU)~\citep{everingham2010pascal} are used as the performance metrics, respectively. All the evaluation metrics are expressed as percentages (\%), and higher values indicate better performance. Furthermore, to provide an overall evaluation metric for an algorithm's performance, the average per-user performance improvement relative to the local training baseline, denoted as $\Delta$, can be employed~\citep{lu2024fedhca2}. Specifically, we have $\Delta = \frac{1}{N}\sum_{n\in [N]} {(M^n_{\textrm{alg}} - M^n_{\textrm{local}})}/{M^n_{\textrm{local}}}$, where $M^n_{\textrm{alg}}$ and $M^n_{\textrm{local}}$ represent the performance metrics of the considered algorithm and the local training baseline, respectively, for the task of user~$n \in [N]$.
\vspace{-0.2cm}
\subsection{Benchmark Experiments}
\vspace{-0.2cm}
\paragraph{Comparison with Baselines}
Table~\ref{baseline_com} presents the performance of FedMuscle compared to the baseline algorithms in the uni-modal Setup1. Based on the results in Table~\ref{baseline_com}, our observations are as follows: (1)~The proposed algorithm, FedMuscle, consistently achieves better overall performance in terms of $\Delta$ compared to the baseline algorithms. (2)~Using public datasets with detailed images, such as Pascal VOC and COCO, leads to improved performance for FedMuscle\footnote{{In Appendix~\ref{sec: training_curve}, we present a training curve illustrating how a decrease in the Muscle loss corresponds to an increase in $\Delta$ over communication rounds.}}. (3)~Even when the public dataset is derived from CIFAR-100, which contains less detailed images, FedMuscle still enhances users' performance on their respective tasks. {{The performance of FedMuscle using a synthetic public dataset is presented in Appendix~\ref{sec: synthetic}. These results confirm that FedMuscle remains effective regardless of the chosen public dataset. However, it can further improve users’ overall performance when the public dataset contains feature-rich samples.}} 

\par

\begin{table}[t] 
\caption{Performance of FedMuscle compared to the considered baseline algorithms in Setup1.}
\vspace{-0.4cm}
\label{baseline_com}
\centering
\scalebox{0.575}
{\arrayrulecolor{black} {\begin{tabular}{c c c c c c c c c c c c c c}
\\ \toprule
\multicolumn{1}{c}{\multirow{4}{*}{\makecell{Public\\Dataset}}}
&\multicolumn{1}{c}{\multirow{4}{*}{\makecell{User\\ \#}}}
&\multicolumn{1}{c}{\multirow{4}{*}{Model}}
&\multicolumn{1}{c}{\multirow{4}{*}{Task}}
&\multicolumn{1}{c}{\multirow{4}{*}{\makecell{Eval.\\Metric}}}
&\multicolumn{7}{c}{Algorithm} \\\cline{6-14}
 & & & &
&\multicolumn{1}{c}{\multirow{3}{*}{\makecell{FedMuscle \\(Ours)}}}
&\multicolumn{1}{c}{\multirow{3}{*}{\makecell{SAGE\\(\citeyear{liu2025mind})}}}
&\multicolumn{1}{c}{\multirow{3}{*}{\makecell{FedHeNN\\(\citeyear{makhija2022architecture})}}}
&\multicolumn{1}{c}{\multirow{3}{*}{\makecell{CoFED\\(\citeyear{cao2023cross})}}}
&\multicolumn{1}{c}{\multirow{3}{*}{\makecell{FedDF\\(\citeyear{lin2020ensemble})}}}
&\multicolumn{1}{c}{\multirow{3}{*}{\makecell{FedRCL\\(\citeyear{seo2024relaxed})}}}
&\multicolumn{1}{c}{\multirow{3}{*}{\makecell{SimCLR\\(\citeyear{chen2020simple})}}}
&\multicolumn{1}{c}{\multirow{3}{*}{\makecell{Pseudo-\\labeling\\ (\citeyear{cascante2021curriculum})}}}
&\multicolumn{1}{c}{\multirow{3}{*}{\makecell{Local\\ Training}}}\\
& & & & & & & & & & & &\\
& & & & & & & & & & & &
\\\midrule%
\multirow{7}{*}{\rotatebox{90}{Pascal VOC}} & 1 & ViT-Base & MLC & micro-F1 &46.33\tiny{$\pm$ 0.12}  &41.97\tiny{$\pm$ 0.34}  &41.27\tiny{$\pm$ 0.48}  &47.47\tiny{$\pm$ 0.12}  &42.43\tiny{$\pm$ 0.17}  & 41.77\tiny{$\pm$ 0.34} &40.80\tiny{$\pm$ 0.45}  &45.93\tiny{$\pm$ 0.52}  & 42.17\tiny{$\pm$ 0.24}\\\cline{2-14}
& 2 & ViT-Small & MLC & micro-F1 &49.77\tiny{$\pm$ 0.29}  &45.37\tiny{$\pm$ 0.59}  &45.87\tiny{$\pm$ 0.46}  &48.80\tiny{$\pm$ 0.36}  &43.67\tiny{$\pm$ 1.29}  &44.07\tiny{$\pm$ 0.29}  &44.83\tiny{$\pm$ 0.46}  &48.07\tiny{$\pm$ 0.45}  & 43.67\tiny{$\pm$ 0.59}\\\cline{2-14}
& 3 & ViT-Large & MLC & micro-F1 &49.40\tiny{$\pm$ 0.50}  &44.10\tiny{$\pm$ 0.08}   &48.23\tiny{$\pm$ 0.66}  &49.77\tiny{$\pm$ 0.49}  &42.57\tiny{$\pm$ 0.19}  &42.43\tiny{$\pm$ 0.42}  &44.53\tiny{$\pm$ 0.68}  &48.20\tiny{$\pm$ 0.28}  & 42.93\tiny{$\pm$ 0.41}\\\cline{2-14}
& 4 & ViT-Base & IC100 & Accuracy &36.67\tiny{$\pm$ 0.34}  &24.50\tiny{$\pm$ 0.57}  &24.10\tiny{$\pm$ 0.51}  &24.67\tiny{$\pm$ 0.29}  &23.93\tiny{$\pm$ 0.41}  &25.27\tiny{$\pm$ 0.09}  &27.43\tiny{$\pm$ 0.42}  &21.40\tiny{$\pm$ 0.33}  & 24.77\tiny{$\pm$ 0.42}\\\cline{2-14}
& 5 & ViT-Small & IC100 & Accuracy &29.93\tiny{$\pm$ 0.54}   &25.13\tiny{$\pm$ 0.47}   &24.83\tiny{$\pm$ 1.10}   &23.70\tiny{$\pm$ 1.31}   &24.70\tiny{$\pm$ 0.45}  & 27.23\tiny{$\pm$ 0.61}  &23.60\tiny{$\pm$ 0.16}   &22.77\tiny{$\pm$ 1.10}   & 24.70\tiny{$\pm$ 0.36}\\\cline{2-14}
 &6 & ViT-Tiny  & IC10 & Accuracy &66.57\tiny{$\pm$ 1.01}  &43.33\tiny{$\pm$ 1.15}  &41.63\tiny{$\pm$ 1.33}   &43.40\tiny{$\pm$ 0.37}   &43.20\tiny{$\pm$ 0.24}   &44.63\tiny{$\pm$ 0.12}  &49.03\tiny{$\pm$ 0.80}  &43.77\tiny{$\pm$ 1.73}   & 43.77\tiny{$\pm$ 0.62}\\\cline{2-14}
& \multicolumn{4}{c}{\cellcolor{lightgray!60} $\Delta$ (\%) $\uparrow$} &\cellcolor{lightgray!60}  \textbf{+26.70} &\cellcolor{lightgray!60}  +0.96 & \cellcolor{lightgray!60} -0.41 &\cellcolor{lightgray!60} +5.83 &\cellcolor{lightgray!60} -0.82 &\cellcolor{lightgray!60} +2.17 &\cellcolor{lightgray!60} +3.57 &\cellcolor{lightgray!60} +1.64 &\cellcolor{lightgray!60} 0.00
\\\midrule%
\multirow{7}{*}{\rotatebox{90}{COCO}} & 1 & ViT-Base & MLC & micro-F1 & 49.10\tiny{$\pm$ 0.45} &41.97\tiny{$\pm$ 0.33}  & 42.07\tiny{$\pm$ 0.66} & 50.87\tiny{$\pm$ 0.38} & 43.17\tiny{$\pm$ 0.46} &41.77\tiny{$\pm$ 0.34} & 42.50\tiny{$\pm$ 0.70} & 47.23\tiny{$\pm$ 0.17} & 42.17\tiny{$\pm$ 0.24}\\\cline{2-14}
& 2 & ViT-Small & MLC & micro-F1 & 51.30\tiny{$\pm$ 0.22} &46.97\tiny{$\pm$ 0.17}  & 46.27\tiny{$\pm$ 0.54} & 52.43\tiny{$\pm$ 0.41} & 43.67\tiny{$\pm$ 0.54} &44.07\tiny{$\pm$ 0.29}  & 44.70\tiny{$\pm$ 0.37} & 50.30\tiny{$\pm$ 0.24} & 43.67\tiny{$\pm$ 0.59}\\\cline{2-14}
& 3 & ViT-Large & MLC & micro-F1 & 50.60\tiny{$\pm$ 0.36} &44.47\tiny{$\pm$ 0.68}  & 47.17\tiny{$\pm$ 1.77} & 53.50\tiny{$\pm$ 0.37} & 42.07\tiny{$\pm$ 0.54} &42.43\tiny{$\pm$ 0.42}  & 45.17\tiny{$\pm$ 1.06}& 50.53\tiny{$\pm$ 0.05} & 42.93\tiny{$\pm$ 0.41}\\\cline{2-14}
& 4 & ViT-Base & IC100 & Accuracy & 37.27\tiny{$\pm$ 0.78} &25.20\tiny{$\pm$ 0.49}  & 24.53\tiny{$\pm$ 0.66} & 24.83\tiny{$\pm$ 0.17} & 24.10\tiny{$\pm$ 0.67} &25.27\tiny{$\pm$ 0.09}  & 28.87\tiny{$\pm$ 1.11}& 21.70\tiny{$\pm$ 0.85} & 24.77\tiny{$\pm$ 0.42}\\\cline{2-14}
& 5 & ViT-Small & IC100 & Accuracy & 30.93\tiny{$\pm$ 0.12} &25.47\tiny{$\pm$ 0.52}   & 25.20\tiny{$\pm$ 0.43} & 24.73\tiny{$\pm$ 0.75} & 25.23\tiny{$\pm$ 0.33} &27.23\tiny{$\pm$ 0.61}  & 23.17\tiny{$\pm$ 0.87}& 23.67\tiny{$\pm$ 0.66} & 24.70\tiny{$\pm$ 0.36}\\\cline{2-14}
 &6 & ViT-Tiny  & IC10 & Accuracy & 63.23\tiny{$\pm$ 0.58} & 43.03\tiny{$\pm$ 0.54} & 43.07\tiny{$\pm$ 1.03} & 40.90\tiny{$\pm$ 1.44} & 43.20\tiny{$\pm$ 0.24} &44.63\tiny{$\pm$ 0.12}  & 47.37\tiny{$\pm$ 0.45}& 44.13\tiny{$\pm$ 1.73} & 43.77\tiny{$\pm$ 0.62}\\\cline{2-14}
& \multicolumn{4}{c}{\cellcolor{lightgray!60} $\Delta$ (\%) $\uparrow$} &\cellcolor{lightgray!60}  \textbf{+28.65} &\cellcolor{lightgray!60} +2.31 & \cellcolor{lightgray!60} +2.57 &\cellcolor{lightgray!60} +9.85 &\cellcolor{lightgray!60} -0.25 &\cellcolor{lightgray!60}  +2.17&\cellcolor{lightgray!60} +4.49 &\cellcolor{lightgray!60} +4.86 &\cellcolor{lightgray!60} 0.00
\\\midrule%
\multirow{7}{*}{\rotatebox{90}{CIFAR-100}} & 1 & ViT-Base & MLC & micro-F1 & 42.33\tiny{$\pm$ 0.05} & 42.10\tiny{$\pm$ 0.14}  & 41.43\tiny{$\pm$ 1.16} & 43.73\tiny{$\pm$ 0.57} & 42.40\tiny{$\pm$ 0.22} &41.77\tiny{$\pm$ 0.34}  & 40.23\tiny{$\pm$ 0.24} & 43.73\tiny{$\pm$ 0.21} & 42.17\tiny{$\pm$ 0.24} \\\cline{2-14}
& 2 & ViT-Small & MLC & micro-F1 & 46.50\tiny{$\pm$ 0.14} & 44.97\tiny{$\pm$ 0.19} & 45.90\tiny{$\pm$ 0.83} &  47.10\tiny{$\pm$ 0.64} & 43.93\tiny{$\pm$ 0.74} &44.07\tiny{$\pm$ 0.29}  & 43.90\tiny{$\pm$ 0.75}& 47.27\tiny{$\pm$ 0.73} & 43.67\tiny{$\pm$ 0.59}\\\cline{2-14}
& 3 & ViT-Large & MLC & micro-F1 & 45.63\tiny{$\pm$ 0.68} & 44.17\tiny{$\pm$ 0.33} & 47.67\tiny{$\pm$ 0.96} & 45.23\tiny{$\pm$ 0.49} & 43.10\tiny{$\pm$ 0.22} &42.43\tiny{$\pm$ 0.42}  & 38.33\tiny{$\pm$ 0.17}& 46.53\tiny{$\pm$ 0.79} & 42.93\tiny{$\pm$ 0.41}\\\cline{2-14}
& 4 & ViT-Base & IC100 & Accuracy & 33.43\tiny{$\pm$ 0.21} & 24.67\tiny{$\pm$ 0.34} &  24.23\tiny{$\pm$ 1.11} & 27.17\tiny{$\pm$ 0.78} & 23.17\tiny{$\pm$ 0.74} &25.27\tiny{$\pm$ 0.09}  & 19.73\tiny{$\pm$ 0.68}& 26.93\tiny{$\pm$ 0.25} & 24.77\tiny{$\pm$ 0.42}\\\cline{2-14}
& 5 & ViT-Small & IC100 & Accuracy & 29.60\tiny{$\pm$ 0.71} & 24.63\tiny{$\pm$ 0.49} &  25.50\tiny{$\pm$ 0.24} & 27.27\tiny{$\pm$ 1.29} & 24.13\tiny{$\pm$ 0.38} &27.23\tiny{$\pm$ 0.61}  & 22.53\tiny{$\pm$ 1.32}& 26.80\tiny{$\pm$ 1.18} & 24.70\tiny{$\pm$ 0.36}\\\cline{2-14}
& 6 & ViT-Tiny  & IC10 & Accuracy & 58.37\tiny{$\pm$ 0.57} & 41.83\tiny{$\pm$ 0.21}  & 44.43\tiny{$\pm$ 1.99} & 43.30\tiny{$\pm$ 0.33} & 43.20\tiny{$\pm$ 0.24} &44.63\tiny{$\pm$ 0.12}  & 50.03\tiny{$\pm$ 1.05}& 43.77\tiny{$\pm$ 0.52} & 43.77\tiny{$\pm$ 0.62}\\\cline{2-14}
& \multicolumn{4}{c}{\cellcolor{lightgray!60}$\Delta$ (\%) $\uparrow$} &\cellcolor{lightgray!60}  \textbf{+16.88} &\cellcolor{lightgray!60} +0.1 
& \cellcolor{lightgray!60} +2.83 &\cellcolor{lightgray!60} +5.99 &\cellcolor{lightgray!60} -1.42 &\cellcolor{lightgray!60} +2.17 &\cellcolor{lightgray!60} -4.94 &\cellcolor{lightgray!60} +6.26 &\cellcolor{lightgray!60} 0.00
\\\midrule 
& \multicolumn{4}{c}{Computation Cost (TeraFLOPS)} & 523 & 1171 & 34 & 586 & 322 & 78 & 849 & 152 & 29
\\ \bottomrule
\end{tabular}}}
\vspace{-0.5cm}
\end{table}

\vspace{-0.2cm}
\paragraph{Performance of FedMuscle in Multi-Modal Setup2}
Table~\ref{Setup2_com} shows the performance of FedMuscle across a broader range of tasks, including both CV and NLP tasks. The results in Table~\ref{Setup2_com}
\begin{wraptable}{r}{0.46\textwidth}
\vspace{-11pt}  
\begin{center}
\caption{Performance of FedMuscle in Setup2.}
\label{Setup2_com}
\vspace{-11pt}
\centering
\scalebox{0.575}
{\arrayrulecolor{black} {\begin{tabular}{c c c c c c}
\\ \toprule
\multicolumn{1}{c}{\multirow{3}{*}{\makecell{User \\ \#}}}
&\multicolumn{1}{c}{\multirow{3}{*}{Model}}
&\multicolumn{1}{c}{\multirow{3}{*}{Task}}
&\multicolumn{1}{c}{\multirow{3}{*}{\makecell{Eval.\\ Metric}}}
&\multicolumn{2}{c}{Algorithm} \\\cline{5-6}
& & &
&\multicolumn{1}{c}{\multirow{2}{*}{\makecell{FedMuscle\\ (Ours)}}}
&\multicolumn{1}{c}{\multirow{2}{*}{\makecell{Local \\ Training}}}\\
& & & & &
\\\midrule%
1 & ViT-Base & MLC & micro-F1 &47.80\tiny{$\pm$ 0.40}  & 42.17\tiny{$\pm$ 0.24}\\\cline{1-6}
2 & ViT-Small & MLC & micro-F1 &51.05\tiny{$\pm$ 0.45}  & 43.67\tiny{$\pm$ 0.59}\\\cline{1-6}
3 & ViT-Large & MLC & micro-F1  &49.00\tiny{$\pm$ 0.50}  & 42.93\tiny{$\pm$ 0.41}\\\cline{1-6}
 4 & ViT-Base & IC100 & Accuracy &36.15\tiny{$\pm$ 0.25}  & 24.77\tiny{$\pm$ 0.42}\\\cline{1-6}
5 & ViT-Small & IC100 & Accuracy &28.70\tiny{$\pm$ 0.70}  & 24.70\tiny{$\pm$ 0.36}\\\cline{1-6} 
 6 & ViT-Tiny  & IC10 & Accuracy &61.60\tiny{$\pm$ 0.40}  & 43.77\tiny{$\pm$ 0.62}\\\cline{1-6}
7 & SegFormer-B0  & SS & mIoU &33.95\tiny{$\pm$ 1.65}  & 33.73\tiny{$\pm$ 1.93}\\\cline{1-6}
8 & SegFormer-B1  & SS & mIoU &33.40\tiny{$\pm$ 2.20}  & 32.43\tiny{$\pm$ 0.48}          \\\cline{1-6}
9 & BERT-Base  & TC & Accuracy &45.95\tiny{$\pm$ 0.05}  &	41.10\tiny{$\pm$ 0.88} \\\cline{1-6}
10 & DistilBERT-Base  & TC & Accuracy &54.10\tiny{$\pm$ 0.20}  &56.03\tiny{$\pm$ 0.98} \\\cline{1-6}
\multicolumn{4}{c}{\cellcolor{lightgray!60} $\Delta$ (\%) $\uparrow$} &\cellcolor{lightgray!60}  \textbf{+14.39} &\cellcolor{lightgray!60} 0.00
\\ \bottomrule
\end{tabular}}}
  \end{center}
  \vspace{-20pt}
\end{wraptable}
demonstrate that even with the addition of two users performing the SS task and two users performing the TC task, FedMuscle still improves the overall performance of users on their respective tasks in terms of $\Delta$. Despite the differences in modalities, the use of CL in FedMuscle aligns representations from various modalities into a shared representation space, enabling each user to capture common task-agnostic information from the other users.

\vspace{-0.2cm}
\paragraph{Impact of Muscle Loss in FedMuscle}
The results in Figure~\ref{fig:loss_comp} show that, compared to the Gramian-based contrastive loss recently proposed by \citet{cicchetti2024gramian}, the Muscle loss function improves $\Delta$ by 11.2\%, 28.4\%, and 11.1\% when Pascal VOC, COCO, and CIFAR-100 are used as the public datasets, respectively\footnote{Detailed results can be found in Appendix~\ref{sec: loss_comp}.}. These\, improvements\, stem\, from\, the\, fact\, that\, the\, Muscle~loss

\begin{wrapfigure}{r}{0.5\textwidth}
\vspace{-15pt}
    \begin{center}
        \centering
        \includegraphics[trim=0cm -0.7cm 0cm 0cm, width=\linewidth]{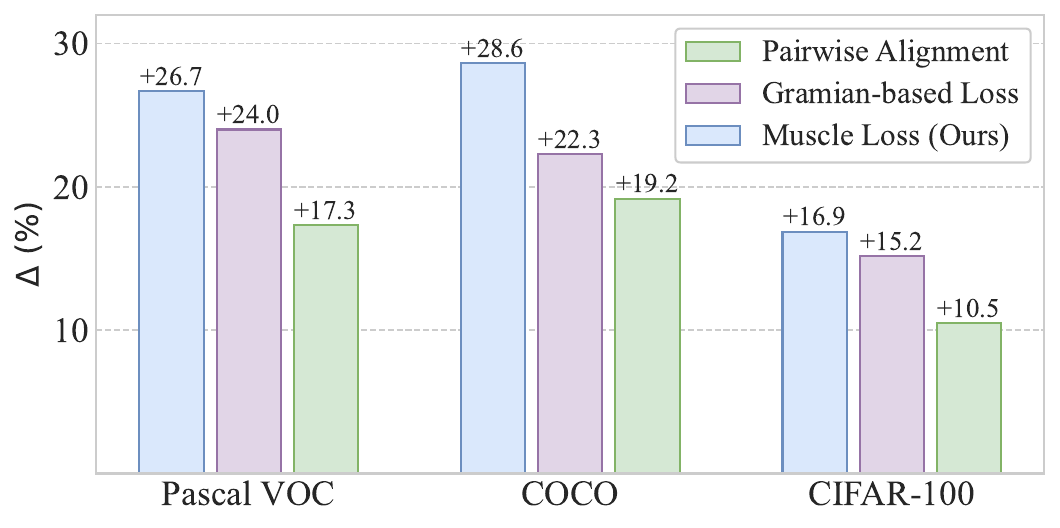}
          \vspace{-30pt}
    \end{center}
    \caption{Performance of FedMuscle in Setup1 using the proposed Muscle loss function, compared to the Gramian-based contrastive loss and pairwise alignment.}
    \label{fig:loss_comp}
    \vspace{-5pt}
\end{wrapfigure}
\vspace{-5pt}
has solid theoretical justifications and can better capture dependencies among the representations obtained from multiple models. In addition, the computation of the Muscle loss is modular and partially offloaded to the server. By contrast, the Gramian-based loss requires computing the determinant of Gramian matrices, which imposes $(M+1)^3$ times higher computational cost on the users. In Appendix~\ref{sec: Muscle_loss_features}, we provide more details on the features offered by the Muscle loss compared to the Gramian-based loss. Furthermore, a comparison between the $\Delta$ values in Figure~\ref{fig:loss_comp} and Table~\ref{baseline_com} reveals that even pairwise alignment in FedMuscle yields better performance than baseline algorithms. This highlights the effectiveness of the FedMuscle algorithm under model and task heterogeneity.

\vspace{-0.2cm}
\paragraph{Integration of Muscle Loss into a Multi-Modal FL Algorithm} To evaluate the effectiveness of our proposed CL framework, we integrate it into an MFL algorithm. We also aim to demonstrate its effectiveness in a setting with a larger number of users, where the models are trained from scratch {under a non-IID partition of local data}. To this end, we adopt the CreamFL~\citep{yu2023multimodal} setup, which consists of 10 uni-modal image users with the CIFAR-100 dataset, 10 uni-modal text users with the AG News dataset~\citep{zhang2015character}, and 15 multi-modal users with the Flickr30K dataset. {Dirichlet distribution with $\alpha=0.1$ is used for non-IID data partition}. CreamFL aims to learn a global model at the server using representations derived from a public dataset shared among the users and the server. COCO is used as the public dataset. We integrate the Muscle loss into the CreamFL setup by replacing its local contrastive regularization and global-local contrastive aggregation methods with our proposed Muscle loss. More details are provided in Appendix~\ref{sec:cream}. Table~\ref{creamFL_comp} shows that integrating the Muscle loss into the CreamFL setup improves the global model’s performance in terms of image-to-text recall (i2t\_R) and text-to-image recall (t2i\_R)~metrics.

\begin{table}[t]
\caption{Global model performance on the image-text retrieval task in CreamFL~\citep{yu2023multimodal} setup.}
\vspace{-0.25cm}
\label{creamFL_comp}
\centering
\scalebox{0.575}
{{
\renewcommand{\arraystretch}{0.9}
\begin{tabular}{c c c c c c c c >{\columncolor{lightgray!60}}c}
\\\toprule
\multicolumn{1}{c}{\multirow{1}{*}{Eval. Set}}
&\multicolumn{1}{c}{\multirow{1}{*}{Algorithm}}
&\multicolumn{1}{c}{\multirow{1}{*}{i2t$\_$R@1}} 
&\multicolumn{1}{c}{\multirow{1}{*}{i2t$\_$R@5}}
&\multicolumn{1}{c}{\multirow{1}{*}{i2t$\_$R@10}}
&\multicolumn{1}{c}{\multirow{1}{*}{t2i$\_$R@1}}
&\multicolumn{1}{c}{\multirow{1}{*}{t2i$\_$R@5}}
&\multicolumn{1}{c}{\multirow{1}{*}{t2i$\_$R@10}}
&\multicolumn{1}{c}{\multirow{1}{*}{\cellcolor{lightgray!60}$\Delta$ (\%) $\uparrow$}}
\\\midrule
\multirow{3}{*}{1K Test Images} & Local Training &48.36  &79.66  &89.24  &37.46  &74.05  &86.59 &\cellcolor{lightgray!60}0.00 \\\cmidrule(l){2-9} 
 & CreamFL   &49.20 &80.76  &89.92  &37.83  &74.62  &86.53  &\cellcolor{lightgray!60}+0.93 \\\cmidrule(l){2-9} \morecmidrules
& CreamFL + Muscle Loss (Ours)   &49.32  &80.96  & 90.02 &38.11  &74.39 &86.87  &\cellcolor{lightgray!60}\textbf{+1.17}
\\ \midrule
\multirow{3}{*}{5K Test Images} & Local Training &24.78  &52.44  &66.30  &17.72  &43.28  &57.47 &\cellcolor{lightgray!60}0.00 \\\cmidrule(l){2-9}
& CreamFL   &24.48  &53.48  &66.92  &17.96  &43.74  &58.15  &\cellcolor{lightgray!60}+0.88 \\\cmidrule(l){2-9} 
& CreamFL + Muscle Loss (Ours)    &25.50  &53.62  & 66.70 &18.20  &44.15 &58.15  &\cellcolor{lightgray!60}\textbf{+1.94}
\\ \bottomrule
\end{tabular}}}
\end{table}

\vspace{-0.2cm}
\subsection{Ablation Studies}

\begin{figure}[t]
    \centering
    \begin{minipage}[t]{0.49\textwidth}
        \centering
        \includegraphics[width=\linewidth]{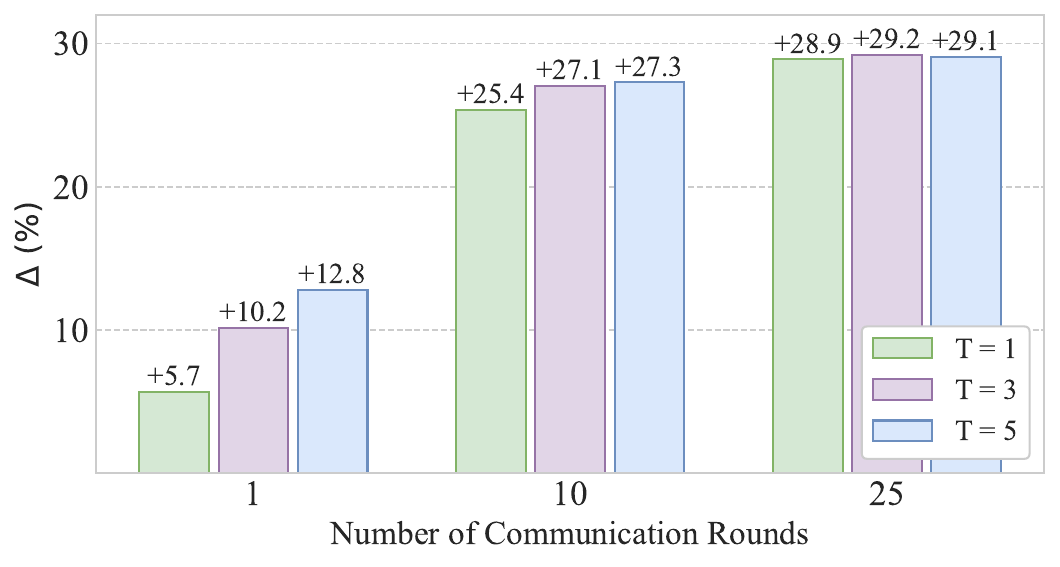}
        \vspace{-15pt}
        \caption{Impact of the number of local epochs~$E$, communication rounds~$R$, and CL epochs~$T$ on FedMuscle performance. COCO is used as the public dataset.} 
        \label{fig:CR_E}
    \end{minipage}
    \hfill 
    \begin{minipage}[t]{0.49\textwidth}
        \centering
        \includegraphics[width=\linewidth]{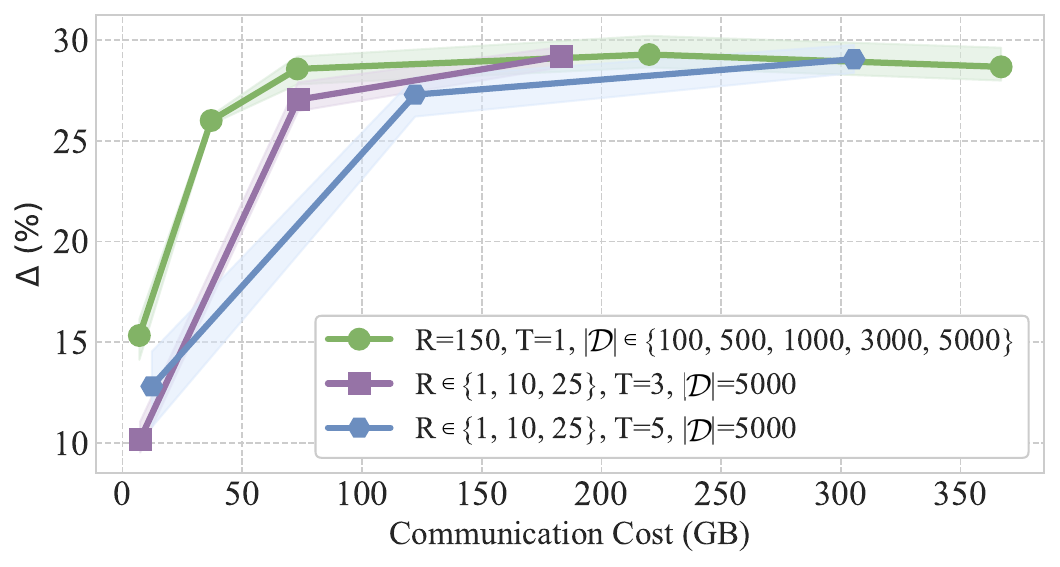}
        \vspace{-15pt}
        \caption{Impact of the public dataset size~$\lvert\mathcal{D} \rvert$ and number of communication rounds~$R$ on the communication cost of FedMuscle. COCO is used as the public dataset.}
        \label{fig:Com_cost}
    \end{minipage}
    \vspace{-0.4cm}
\end{figure}

\vspace{-0.2cm}
\paragraph{Effect of the Number of Communication Rounds, Local Epochs, and CL Epochs} Figure~\ref{fig:CR_E} illustrates the impact of the number of communication rounds~$R$, local epochs~$E$, and CL epochs~$T$ on the performance of FedMuscle. To ensure a fair comparison, we set $E \times R = 150$ for all cases. Note that higher values of $E$ and $R$ correspond to increased computation and communication costs, respectively. The results in Figure~\ref{fig:CR_E} show that a greater number of communication rounds leads to improved performance for FedMuscle\footnote{Detailed results can be found in Appendix~\ref{sec: CR_LT}.}. Additionally, when the number of communication rounds is low, FedMuscle can still achieve improved performance by increasing the number of CL epochs.

\vspace{-0.2cm}
\paragraph{Effect of the Public Dataset Size}
Figure~\ref{fig:Com_cost} illustrates the impact of the public dataset size~$\lvert\mathcal{D} \rvert$ on FedMuscle’s communication cost and its performance. To provide a comprehensive comparison, we have also included two cases from Figure~\ref{fig:CR_E} in Figure~\ref{fig:Com_cost} (i.e., $T=3$ and $T=5$). A larger~$\lvert\mathcal{D} \rvert$ leads to higher communication costs as more representation vectors are transmitted between users and the server. However, as shown in Figure~\ref{fig:Com_cost}, FedMuscle is flexible enough to achieve high performance in terms of $\Delta$ while maintaining a low communication cost by appropriately adjusting the number of communication rounds~$R$, the number of CL epochs~$T$, and the public dataset size~$\lvert\mathcal{D} \rvert$.

\vspace{-0.2cm}
\paragraph{{Effect of Non-IID Data Distribution and Number of Users}}
{We consider a setup where the users can perform one of the following four tasks: IC100, IC10, image classification using Food-101~\citep{bossard2014food} (ICF), and image classification using Caltech-101~\citep{fei2004learning} (ICC). Specifically, users 1–3 perform IC100, users 4–6 perform IC10, users 7–9 perform ICF, and users 10–12 perform ICC. For those performing the same task, the local data samples are divided across them using a Dirichlet distribution with $\alpha=0.1$. All users have 100 training samples and 2000 test samples. In Appendix~\ref{sec:noniid}, we have illustrated the non-IID data partitioning across users. Furthermore, we consider heterogeneous models: users 1, 4, 7, and 10 use ViT-Base, users 2, 5, 8, and 11 use ViT-Small, and users 3, 6, 9, and 12 use ViT-Tiny.}\par

{
Table \ref{noniid} presents the performance of FedMuscle under the aforementioned non-IID data partitioning across users for three scenarios, where the number of users is set to 4, 8, and 12. The results in Table \ref{noniid} show that FedMuscle improves the performance of each user on its respective task. Thus, the overall performance, measured in terms of $\Delta$, is improved by FedMuscle regardless of the data, task, and model heterogeneity among users, and the number of users in the system.
}

\begin{table*}[t]
\caption{{Performance of FedMuscle in the non-IID setting, where users assigned to the same task have heterogeneous local data distributions. The public dataset is derived from Pascal VOC.}}
\vspace{-0.3cm}
\label{noniid}
\centering
\scalebox{0.575}
{{\begin{tabular}{c c c c c c c c c c c c c c c}
\\\toprule
\multicolumn{1}{c}{\multirow{2}{*}{\makecell{Algorithm}}}
&\multicolumn{1}{c}{\multirow{2}{*}{\makecell{$N$}}}
&\multicolumn{12}{c}{\multirow{1}{*}{\makecell{User \#}}}
& \multicolumn{1}{c}{\multirow{2}{*}{$\Delta$ (\%) $\uparrow$}} \\\cmidrule(l){3-14}
&
&\multicolumn{1}{c}{\multirow{1}{*}{1}} 
&\multicolumn{1}{c}{\multirow{1}{*}{2}} 
&\multicolumn{1}{c}{\multirow{1}{*}{3}}
&\multicolumn{1}{c}{\multirow{1}{*}{4}}
&\multicolumn{1}{c}{\multirow{1}{*}{5}}
&\multicolumn{1}{c}{\multirow{1}{*}{6}}
&\multicolumn{1}{c}{\multirow{1}{*}{7}}
&\multicolumn{1}{c}{\multirow{1}{*}{8}}
&\multicolumn{1}{c}{\multirow{1}{*}{9}}
&\multicolumn{1}{c}{\multirow{1}{*}{10}}
&\multicolumn{1}{c}{\multirow{1}{*}{11}}
&\multicolumn{1}{c}{\multirow{1}{*}{12}}
& 
\\\midrule
Local Training & - & 45.2 & 54.2 & 30.8 & 94.2 & 94.8  & 79.2  &  40.0 & 36.2 & 13.6 & 78.7 & 75.8 & 62.6 & \cellcolor{lightgray!60} 0.00 \\\cmidrule(l){1-15}
\multirow{3}{*}{{\makecell{FedMuscle\\ (Ours)}}} & 4 & 58.4 & - & - & 94.8 & - & - & 49.9 & - & - & 83.4 & - & - & \cellcolor{lightgray!60} +15.14  \\\cmidrule(l){2-15}
& 8 & 58.0 & 55.4 & - & 94.2 & 93.7 & - & 50.3 & 42.5 & - & 82.4 & 79.0 & - & +10.18 \cellcolor{lightgray!60}  \\\cmidrule(l){2-15}
& 12 & 57.7 & 54.2 & 41.4  & 95.1 & 92.8  & 86.0  & 47.8 & 40.3 & 25.6 & 81.8 & 78.0 & 71.0 & +17.40\cellcolor{lightgray!60} 
\\ \bottomrule
\end{tabular}}}
\vspace{-0.5cm}
\end{table*}

\vspace{-0.2cm}
\paragraph{Effect of Number of Selected Representation Matrices} We present an ablation study on the number of selected representation matrices $M$ in Table~\ref{M_abl_comp}. The results are obtained under Setup1, using Pascal VOC as the public dataset. As shown in the table, increasing $M$ improves overall performance in terms of $\Delta$, but also increases the communication cost per round. Notably, the performance gains from increasing $M$ beyond 3 (e.g., from 3 to 4 or 5) are marginal. Therefore, we adopt $M=3$ with random user selection in our experiments, as this offers a balanced trade-off between performance and communication overhead. We also find that random selection provides sufficient diversity across rounds to facilitate knowledge transfer among all tasks.

\begin{table*}[ht]
\vspace{-5pt} 
\caption{Performance of FedMuscle under different number of selected representation matrices $M$.}
\vspace{-0.3cm}
\label{M_abl_comp}
\centering
\scalebox{0.575}
{{
\renewcommand{\arraystretch}{0.9}
\begin{tabular}{c c c c c c}
\\\toprule
\multicolumn{1}{c}{\multirow{1}{*}{$M$}}
&\multicolumn{1}{c}{\multirow{1}{*}{1}} 
&\multicolumn{1}{c}{\multirow{1}{*}{2}}
&\multicolumn{1}{c}{\multirow{1}{*}{3}}
&\multicolumn{1}{c}{\multirow{1}{*}{4}}
&\multicolumn{1}{c}{\multirow{1}{*}{5}}
\\\midrule
$\Delta$ &+17.90  &+24.17  &+26.70  &+27.53  &+27.74
\\ \midrule
Communication Cost per Round (GB) &0.004  &0.050  &0.956  &19.080  &381.565  
\\ \bottomrule
\end{tabular}}}
\end{table*}
\vspace{-0.2cm}
\paragraph{{Effect of Batch Size}} 
{We investigate the impact of batch size $B$ on FedMuscle's performance in Setup1 when COCO is used as the public dataset. Note that by increasing the batch size, the}
\begin{wrapfigure}{r}{0.4\textwidth}
\vspace{-12pt}
    \begin{center}
        \centering
        \includegraphics[trim=0cm -0.7cm 0cm 0cm, width=0.9\linewidth]{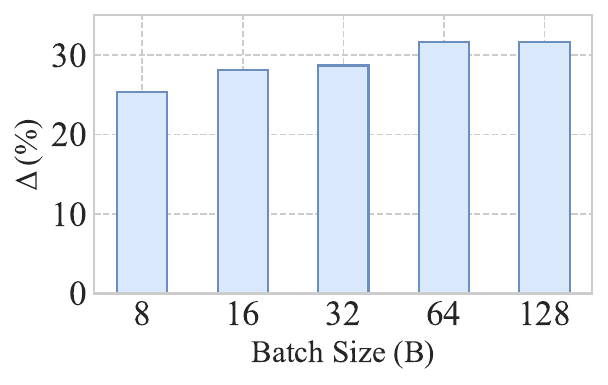}
          \vspace{-20pt}
    \end{center}
    \caption{{Impact of batch size on FedMuscle’s performance in Setup1. COCO is used as the public dataset.}}
    \label{fig:batch_eff}
    \vspace{-5pt}
\end{wrapfigure}
{approximation used for deriving inequality~(\ref{IE_proof}) in Appendix~\ref{sec: proof} becomes more accurate. Thus, based on Theorem~\ref{Theo1}, minimizing the Muscle loss provides a tighter lower bound on maximizing the MI between the models' outputs. The results in Figure~\ref{fig:batch_eff} also show that by increasing the batch size, the performance of FedMuscle can be improved in terms of the overall performance metric $\Delta$. However, based on the provided discussion in Section~\ref{sec: fedmuscle}, we know that larger batch sizes can incur higher communication costs in the downlink direction. To achieve a balanced trade-off between performance and communication overhead, we adopt $B=32$ in our experiments. Additional ablation studies on the output dimension of the representation models $d$ and temperature parameters~{\small$\tau^{(4)}_{n,m}$} and {\small$\tau^{(3)}_{n,m}$} are presented in Appendix~\ref{sec:dim_abl}.} 

\vspace{-0.3cm}
\section{Conclusion}
\vspace{-0.3cm}
In this work, we proposed a new CL loss function, called Muscle, which systematically and effectively captures dependencies among representations obtained from multiple models. These dependencies are modeled through theoretically grounded weighting coefficients in the Muscle loss. Furthermore, we showed that minimizing the Muscle loss is equivalent to maximizing a lower bound on the MI between the models' representations. Building on the Muscle loss, we designed FedMuscle, a novel and effective FMTL algorithm that enables collaborative training of heterogeneous models in the presence of task heterogeneity across users. To this end, FedMuscle aligns the representation spaces of users' models using a shared public dataset to capture common, task-agnostic information. Through extensive experiments, we showed that FedMuscle consistently improves users' performance on their respective tasks compared to state-of-the-art baseline algorithms under both model and task heterogeneity. A discussion of possible future work is provided in Appendix~\ref{sec: discussion}.

\subsubsection*{Reproducibility Statement}
For theoretical results, we provide the complete proofs along with their underlying assumptions in Section~\ref{loss_section} and Appendices~\ref{sec: proof_loss}, \ref{dependence}, \ref{temp_rel}, \ref{sec: proof}, and \ref{MI_dis}. All data used in this study are publicly available. The code for running our algorithm is also provided in the supplementary material. Furthermore, we provide all experimental details in Section~\ref{sec_exp} and Appendices~\ref{sec:data_exp}, \ref{sec:imp_exp}, and \ref{sec:cream}. We have cited all original sources for the models, baselines, and datasets used in this work. Please refer to Section~\ref{sec_exp} for further details.



{
\small
\bibliography{iclr2026_conference}
\bibliographystyle{iclr2026_conference}
}

\newpage
\appendix

We provide more details and results about our work in the appendices. Here are the contents:
\begin{itemize}
    \item Appendix~\ref{sec: proof_loss}: Derivation of our proposed Muscle loss function.

    \item Appendix~\ref{dependence}: Effect of $\alpha_{\bm{j}}$ in capturing dependencies.
    
    \item Appendix~\ref{temp_rel}: The relation between $\tau^{(N)}_{k,l}$ and $\tau^{(N-1)}_{k,l}$.
    
    \item Appendix~\ref{toy_exam}: Effect of the weighting coefficient $\alpha_{\bm{j}}$ in a toy example.
    
    \item Appendix~\ref{sec: proof}: Proof of Theorem~\ref{Theo1}.
    
    \item Appendix~\ref{MI_dis}: Discussion on the MI related to $\mathcal{L}_{\textrm{Muscle}}^{n}(\bm{z}^{n}_i)$ and $\mathcal{L}_{\textrm{Pairwise}}^{n}(\bm{z}^{n}_i)$.
    
    \item Appendix~\ref{sec:data_exp}: Details of the local training and test datasets in experiments.
    
    \item Appendix~\ref{sec:imp_exp}: Details of the selected models and assigned tasks for users.

    \item {Appendix~\ref{sec: training_curve}: Impact of decreasing Muscle loss on $\Delta$.}

    \item {Appendix~\ref{sec: synthetic}: Performance of FedMuscle using a synthetic public dataset.}

    \item Appendix~\ref{sec: loss_comp}: Impact of Muscle loss function in FedMuscle.

    \item Appendix~\ref{sec: Muscle_loss_features}: Features offered by the Muscle loss.

    \item Appendix~\ref{sec:cream}: Integration of Muscle loss into CreamFL.

    \item Appendix~\ref{sec: CR_LT}: Effect of local epochs, communication rounds, and CL epochs.

    \item {Appendix~\ref{sec:noniid}: Non-IID data partition among users.}


    \item Appendix~\ref{sec:dim_abl}: Impact of the output dimension {and temperature}.

    \item Appendix~\ref{sec: discussion}: Discussion on possible future work directions.

    \item Appendix~\ref{sec:lim}: Limitations.

    \item {Appendix~\ref{sec: convergence}: Discussion on convergence of FedMuscle.}

    \item Appendix~\ref{sec: LLM}: The use of large language models (LLMs).
    
\end{itemize}

\section{Derivation of Our Proposed Muscle Loss Function}
\label{sec: proof_loss}

Considering equation~(\ref{general_def}), we have
\begin{align} \label{general_def_expected}
\mathbb{E}\mathcal{L}_{\textrm{Muscle}}^{n}(\bm{z}^{n}_i)=-\mathbb{E}\log{\frac{f(\bm{z}^{n}_i,\, \{\bm{z}^{m}_i\}_{m=1,\, m \neq n}^{N})}{\sum_{\bm{j} \in \mathcal{J}^n }{f(\bm{z}^{n}_i,\, \{\bm{z}^{m}_{j_m}\}_{m=1,\, m \neq n}^{N})}}},    
\end{align}
where $\mathbb{E}$ denotes the expectation over random batch of data samples. Equation (\ref{general_def_expected}) can be interpreted as the categorical cross-entropy for correctly classifying the positives, where the fraction inside the $\log(\cdot)$ function represents the predicted probability. The optimal expected categorical cross-entropy is obtained as $-\mathbb{E}\log{p(\textrm{correctly classifying  positives})}$, where
\begin{align} 
&p(\textrm{correctly classifying  positives})=\nonumber\\
&\hspace{2.7cm} \frac{p\left(\{\bm{z}^{m}_i\}_{m=1,\,m \neq n}^{N} \,\big|\, \bm{z}^{n}_i\right) \prod_{\bm{j'} \in \mathcal{J}^n \setminus (i,\dots,i) } p\left({\{\bm{z}^{m}_{{{j}'_m}}\}_{m=1,\, m\neq n}^{N}}\right)}{{\sum_{\bm{j} \in \mathcal{J}^n} p\left(\{\bm{z}^{m}_{j_m}\}_{m=1,\, m \neq n}^{N} \,\big|\, \bm{z}^{n}_i\right) \prod_{\bm{j'} \in \mathcal{J}^n \setminus (j_1,\dots,j_N) } p\left({\{\bm{z}^{m}_{{{j}'_m}} \}_{m=1,\, m \neq n}^{N}}\right)}} \\
&\hspace{2.7cm}=\, \frac{\frac{p\left(\{\bm{z}^{m}_i\}_{m=1,\, m\neq n}^{N}\,\big|\, \bm{z}^{n}_i\right)}{ p\left({\{\bm{z}^{m}_{i}\}_{m=1,\, m \neq n}^{N} }\right)}}{\sum_{\bm{j} \in \mathcal{J}^n}{\frac{ p\left(\{\bm{z}^{m}_{j_m}\}_{m=1,\,m\neq n}^{N}\,\big|\, \bm{z}^{n}_i\right)}{ p\left({\{\bm{z}^{m}_{{{j}_m}}\}_{m=1,\, m \neq n}^{N}}\right)}}}. \label{density_ratio}
\end{align}
By comparing equations~(\ref{general_def_expected}) and~(\ref{density_ratio}), we observe that the optimal value of $f\left(\bm{z}^{n}_i, \{\bm{z}^{m}_{j_m}\}_{m=1,\, m\neq n}^{N}\right)$ is obtained as follows:
\begin{align} 
f^*\left(\bm{z}^{n}_i, \{\bm{z}^{m}_{j_m}\}_{m=1,\, m \neq n}^{N} \right) \propto\,& \frac{p\left(\{\bm{z}^{m}_{j_m}\}_{m=1,\, m\neq n}^{N} \,\big|\, \bm{z}^{n}_i\right)}{ p\left({\{\bm{z}^{m}_{j_m}\}_{m=1,\, m \neq n}^{N}}\right)} \\
=\,& \frac{p\left(\bm{z}^{n}_i,\,\{\bm{z}^{m}_{j_m}\}_{m=1,\, m\neq n}^{N} \right)}{p\left(\bm{z}^{n}_i\right) p\left({\{\bm{z}^{m}_{j_m}\}_{m=1,\, m \neq n}^{N} }\right)}, \label{proprtionality}
\end{align}
where $\propto$ denotes proportionality, implying equivalence up to a multiplicative constant.\par

In the InfoNCE loss, $f^*$ is associated with the cosine similarity between two vectors (i.e., positive pairs). However, as inferred from equation~(\ref{proprtionality}), in our proposed Muscle loss, $f^*$ should be related to the similarity of more than two vectors. To this end, we define a cosine similarity matrix, each element of which is the cosine similarity between two vectors.\par

Let $\bm{C} \in \mathbb{R}^{N \times N}$ denote the cosine similarity matrix corresponding to the set of vectors $\{ \bm{u}_k \in \mathbb{R}^d\,|\, k \in [N]\}$, where $d$ is the dimension of each vector in the set. We have $\bm{C}_{k,l} = \bm{u}_k \cdot \bm{u}_l$, $k,l \in [N]$. Using matrix $\bm{C}$, we consider that the following expression holds:
\begin{align} 
\frac{p\left(\{ \bm{u}_k \}_{k=1}^{N} \right)}{\prod_{k \in [N]} p\left(\bm{u}_k \right)} &\propto  \exp\left( \frac{1}{2}\sum_{k \in [N]} \sum_{l \in [N] \setminus \{k\}} \bm{C}_{k,l} / \tau^{(N)}_{k,l} \right)  \\
& = \exp\left(\frac{1}{2}{\sum_{k \in [N]} \sum_{l \in [N] \setminus \{k\}} \bm{u}_k \cdot \bm{u}_l / \tau^{(N)}_{k,l} }\right),
\label{cos_sim}
\end{align}
where $\tau^{(N)}_{k,l}=\tau^{(N)}_{l,k}$ is a temperature parameter that moderates the effect of similarity between vectors $\bm{u}_k$ and $\bm{u}_l$, given the correlations among all $N$ vectors. Note that if we set $N=2$ in expression~(\ref{cos_sim}), it simplifies to $\frac{p\left(\bm{u}_k,\,\bm{u}_l \right)}{p\left(\bm{u}_k \right)p\left(\bm{u}_l \right)}\propto \exp\left({ \bm{u}_k \cdot \bm{u}_l / \tau^{(2)}_{k,l}}\right)$. Since there are no additional vectors influencing the similarity between $\bm{u}_k$ and $\bm{u}_l$ when $N=2$, we can set $\tau^{(2)}_{k,l} = \tau_{k,l}$, where $\tau_{k,l}$ is the temperature parameter used in the InfoNCE loss. Therefore, for $N=2$, $\frac{p\left(\bm{u}_k,\,\bm{u}_l \right)}{p\left(\bm{u}_k \right)p\left(\bm{u}_l \right)}\propto \exp\left({ \bm{u}_k \cdot \bm{u}_l / \tau_{k,l}}\right)$, which aligns with the formulation used in the InfoNCE loss~\citep{oord2018representation, tian2020contrastive}. Furthermore, for a set with more than two vectors, expression~(\ref{cos_sim}) accounts for interactions among all vector pairs by incorporating cosine similarity across all pairs, along with distinct temperature parameters.\par

Based on expression~(\ref{cos_sim}), the optimal function $f^*$ in~(\ref{proprtionality}) is obtained as follows:
\begin{align} 
&f^*\left(\bm{z}^{n}_i, \{\bm{z}^{m}_{j_m}\}_{m=1,\, m \neq n}^{N} \right) \propto \nonumber \\
&\frac{ \exp\left({\frac{1}{2}} \left( 2 \sum_{m \in [N] \setminus \{n\}} \bm{z}^{n}_i \cdot \bm{z}^{m}_{j_m} / \tau^{(N)}_{n,m}  + \sum_{m \in [N]\setminus \{n\}} \sum_{m' \in [N] \setminus \{n,m\}} \bm{z}^{m}_{j_m} \cdot \bm{z}^{m'}_{j_{m'}} / \tau^{(N)}_{m,m'} \right) \right)}{ \exp\left({\frac{1}{2}} \sum_{m \in [N]\setminus \{n\}} \sum_{m' \in [N] \setminus \{n,m\}} \bm{z}^{m}_{j_m} \cdot \bm{z}^{m'}_{j_{m'}} / \tau^{(N-1)}_{m,m'}  \right)} \\
&= \frac{ \exp\left(  \sum_{m \in [N] \setminus \{n\}} \bm{z}^{n}_i \cdot \bm{z}^{m}_{j_m} / \tau^{(N)}_{n,m}  \right)}{ \exp\left({\frac{1}{2}} \sum_{m \in [N]\setminus \{n\}} \sum_{m' \in [N] \setminus \{n,m\}} ({1}/{\tau^{(N-1)}_{m,m'}}-{1}/{\tau^{(N)}_{m,m'}}) \bm{z}^{m}_{j_m} \cdot \bm{z}^{m'}_{j_{m'}}  \right)} \\
&= {\alpha_{\bm{j}} \exp\left({ \bm{z}^{n}_i \cdot \sum_{m \in [N] \setminus \{n\}} \bm{z}^{m}_{j_m} / \tau^{(N)}_{n,m} }\right)}, \label{obtimal_f}
\end{align}
where $\alpha_{\bm{j}} = \exp\left({-\frac{1}{2}\sum_{m \in [N] \setminus \{n\}}  \sum_{m' \in [N] \setminus \{n,m\}} \gamma^{(N)}_{m,m'} \bm{z}^{m}_{j_m} \cdot \bm{z}^{m'}_{j_{m'}}}\right)$, and $\gamma^{(N)}_{m,m'} = {1}/{\tau^{(N-1)}_{m,m'}}-{1}/{\tau^{(N)}_{m,m'}}$. Combining~(\ref{obtimal_f}) and~(\ref{general_def}) results in the Muscle loss function $ \mathcal{L}_{\textrm{Muscle}}^{n}(\bm{z}^{n}_i)$ defined in equation~(\ref{eq:new_Loss}).
\newpage
\section{\texorpdfstring{Effect of $\alpha_{\bm{j}}$ in Capturing Dependencies}{Effect of alpha{{j}} in Capturing Dependencies}}
\label{dependence}

In this section, we show that, compared to pairwise alignment~\citep{tian2020contrastive,wang2022transtab,yang2022mutual,xue2024ulip}, our proposed Muscle loss more effectively captures dependencies among representations obtained from more than two models. This improvement arises from the use of weighting coefficients $\alpha_{\bm{j}}$, $\bm{j} \in \mathcal{J}^n$. \par

We consider a special case in which there is no dependency among the representation vectors. In this case, $\alpha_{\bm{j}} = 1$ for all $\bm{j} \in \mathcal{J}^n$. Thus, we have
\begin{align} 
 \mathcal{L}_{\textrm{Muscle}}^{n}(\bm{z}^{n}_i) &=
-\log{\frac{\alpha_{(i,\dots,i)} \exp({ \bm{z}^{n}_i \cdot \sum_{m \in [N] \setminus \{n\}}\bm{z}^{m}_i / \tau^{(N)}_{n,m} })}{ \sum_{\bm{j} \in \mathcal{J}^n } {\alpha_{\bm{j}} \exp({ \bm{z}^{n}_i \cdot \sum_{m \in [N] \setminus \{n\}} \bm{z}^{m}_{j_m} / \tau^{(N)}_{n,m} })}}} \nonumber \\
&\stackrel{\textrm{(a)}}= -\log{\frac{ \exp({ \bm{z}^{n}_i \cdot \sum_{m \in [N] \setminus \{n\}}\bm{z}^{m}_i / \tau^{(N)}_{n,m} })}{ \sum_{\bm{j} \in \mathcal{J}^n } { \exp({ \bm{z}^{n}_i \cdot \sum_{m \in [N] \setminus \{n\}} \bm{z}^{m}_{j_m} / \tau^{(N)}_{n,m} })}}} \nonumber \\
&\stackrel{\textrm{(b)}}= - \sum_{m\in [N] \setminus \{n\}} {\log{\frac{\exp(\bm{z}^{n}_i \cdot \bm{z}^{m}_i / \tau^{(N)}_{n,m})}{\sum_{j_m \in [B] }{\exp(\bm{z}^{n}_i \cdot \bm{z}^{m}_{j_m} / \tau^{(N)}_{n,m} )}}}} \nonumber \\
&\stackrel{\textrm{(c)}}= \sum_{m\in [N] \setminus \{n\}}\mathcal{L}_{\textrm{InfoNCE}}^{n,m}(\bm{z}^{n}_i) \nonumber \\
&\stackrel{\textrm{(d)}}= \mathcal{L}_{\textrm{Pairwise}}^{n}(\bm{z}^{n}_i), \label{eq:relationship}
\end{align}
where equality (a) follows from the assumption that $\alpha_{\bm{j}} = 1$, $\bm{j} \in \mathcal{J}^n$; equality (b) follows from the logarithmic product rule; and equalities (c) and (d) follow from the InfoNCE and pairwise alignment formulations provided in Section~\ref{InfoNCE_section}.  \par

Equality (\ref{eq:relationship}) shows that, when there is no dependency among the representation vectors, our proposed Muscle loss and pairwise alignment are equivalent. Nevertheless, the assumption of no dependency among representation vectors is generally incorrect, as some representation vectors may originate from the same data sample. While pairwise alignment approaches inherently rely on this assumption, our proposed Muscle loss can effectively capture dependencies among representation vectors through the use of weighting coefficients $\alpha_{\bm{j}}$, $\bm{j} \in \mathcal{J}^n$.


\newpage
\section{\texorpdfstring{The Relation Between $\tau^{(N)}_{k,l}$ and $\tau^{(N-1)}_{k,l}$}{The Relation Between tau{(N)}{n,m} and tau{(N-1)}{n,m}}}
\label{temp_rel}
As discussed in Appendix~\ref{sec: proof_loss}, the temperature parameter $\tau^{(N)}_{k,l}$ moderates the effect of similarity between vectors $\bm{u}_k$ and $\bm{u}_l$ when modeling the probability density ratio among $N$ vectors in (\ref{cos_sim}). In this section, we show that the temperature parameter $\tau^{(N-1)}_{k,l}$, which serves the same purpose for $N-1$ vectors, should be smaller than $\tau^{(N)}_{k,l}$. From~(\ref{cos_sim}), for $N$ vectors, we have

\begin{align} 
\frac{p\left(\{ \bm{u}_k \}_{k=1}^{N} \right)}{\prod_{k \in [N]} p\left(\bm{u}_k \right)} &\propto  \exp\left(\frac{1}{2}{\sum_{k =1}^{N} \sum_{l \in [N] \setminus \{k\}} \bm{u}_k \cdot \bm{u}_l / \tau^{(N)}_{k,l} }\right).
\label{cos_sim_rewrite}
\end{align}
Similarly, for $N-1$ vectors, we have 
\begin{align} 
\frac{p\left(\{ \bm{u}_k \}_{k=1}^{N-1} \right)}{\prod_{k \in [N-1]} p\left(\bm{u}_k \right)} &\propto  \exp\left(\frac{1}{2}{\sum_{k=1}^{N-1} \sum_{l \in [N-1] \setminus \{k\}} \bm{u}_k \cdot \bm{u}_l / \tau^{(N-1)}_{k,l} }\right).
\label{cos_sim_rewrite_1}
\end{align}

Now, we aim to derive an expression similar to (\ref{cos_sim_rewrite_1}) using (\ref{cos_sim_rewrite}). We have
\begin{align} 
\frac{p\left(\{ \bm{u}_k \}_{k=1}^{N-1} \right)}{\prod_{k \in [N-1]} p\left(\bm{u}_k \right)} &= \int_{\mathbb{R}^d}{\frac{p\left(\{ \bm{u}_k \}_{k=1}^{N} \right)}{\prod_{k \in [N]} p\left(\bm{u}_k \right)}}p\left(\bm{u}_N \right) \,d\bm{u}_N \nonumber\\
& \stackrel{\textrm{(a)}}\propto \int_{\mathbb{R}^d}{ \exp\left(\frac{1}{2}{\sum_{k \in [N]} \sum_{l \in [N] \setminus \{k\}} \bm{u}_k \cdot \bm{u}_l / \tau^{(N)}_{k,l} }\right)p\left(\bm{u}_N \right) \,d\bm{u}_N } \nonumber \\
 &=\, \int_{\mathbb{R}^d}{ \exp\left(\frac{1}{2}{\sum_{k=1}^{N-1} \sum_{l \in [N-1] \setminus \{k\}} \!\!\!\!\!\!\!\bm{u}_k \cdot \bm{u}_l / \tau^{(N)}_{k,l} } + \sum_{k=1}^{N-1} \bm{u}_k \cdot \bm{u}_N / \tau^{(N)}_{k,N} \right)p\left(\bm{u}_N \right) \,d\bm{u}_N } \nonumber\\
 & = \exp\!\left(\frac{1}{2}{\sum_{k=1}^{N-1} \sum_{l \in [N-1] \setminus \{k\}} \!\!\!\!\!\!\!\!\!\bm{u}_k \cdot \bm{u}_l / \tau^{(N)}_{k,l} } \right) \!\!\underbrace{\int_{\mathbb{R}^d}{\!\!\exp\!\left( \sum_{k=1}^{N-1} \bm{u}_k \cdot \bm{u}_N / \tau^{(N)}_{k,N} \right)\!p\left(\bm{u}_N \right) \,\!d\bm{u}_N}}_{A}, \label{first_term}
\end{align}
where (a) results from~(\ref{cos_sim_rewrite}).\par 

Next, we aim to find a simplified expression for $A$ in (\ref{first_term}). We have 
\begin{align} 
A & =\, {\int_{\mathbb{R}^d}{\exp\left( \bm{u}_N \cdot\sum_{k=1}^{N-1} \bm{u}_k   / \tau^{(N)}_{k,N} \right)p\left(\bm{u}_N \right) \,d\bm{u}_N}} \nonumber \\
& \stackrel{\textrm{(a)}}\propto {\int_{\mathbb{R}^d}{\exp\left( \bm{u}_N \cdot \sum_{k=1}^{N-1} \bm{u}_k  / \tau^{(N)}_{k,N} - \frac{1}{2}{\left\lVert \bm{u}_N \right \rVert^2}  \right)  \,d\bm{u}_N}}\nonumber \\
& =\, {\int_{\mathbb{R}^d}{\exp\left( \frac{1}{2} \left\lVert \sum_{k=1}^{N-1} \bm{u}_k  / \tau^{(N)}_{k,N} \right \rVert^2 - \frac{1}{2}{\left\lVert \bm{u}_N - \sum_{k=1}^{N-1} \bm{u}_k  / \tau^{(N)}_{k,N} \right \rVert^2}  \right)  \,d\bm{u}_N}}\nonumber \\
& =\, \exp\left( \frac{1}{2} \left\lVert \sum_{k=1}^{N-1} \bm{u}_k  / \tau^{(N)}_{k,N} \right \rVert^2 \right) {\int_{\mathbb{R}^d}{\exp\left( - \frac{1}{2}{\left\lVert \bm{u}_N - \sum_{k=1}^{N-1} \bm{u}_k  / \tau^{(N)}_{k,N} \right \rVert^2}  \right)  \,d\bm{u}_N}} \nonumber \\
& \stackrel{\textrm{(b)}}\propto \exp\left( \frac{1}{2} \sum_{k=1}^{N-1} \sum_{l=1}^{N-1} \bm{u}_k \cdot \bm{u}_l  / \tau^{(N)}_{k,N} \tau^{(N)}_{l,N}  \right), \label{second_term}
\end{align}
where to obtain (a), we assume that the marginal distribution of each $\bm{u}_k$, for $k \in [N]$, is an isotropic Gaussian. Thus, we have $p\left(\bm{u}_N \right) \propto \exp\left(-\frac{1}{2} \left\lVert \bm{u}_N \right\rVert^2\right)$. Also, (b) follows from the fact that the integral ${\int_{\mathbb{R}^d}{\exp\left( - \frac{1}{2}{\left\lVert \bm{u}_N - \sum_{k=1}^{N-1} \bm{u}_k  / \tau^{(N)}_{k,N} \right \rVert^2}  \right)  \,d\bm{u}_N}}$ evaluates to a constant.\par

By combining (\ref{first_term}) and (\ref{second_term}), we have
\begin{align} 
\frac{p\left(\{ \bm{u}_k \}_{k=1}^{N-1} \right)}{\prod_{k \in [N-1]} p\left(\bm{u}_k \right)} \propto \exp\left(\frac{1}{2}{\sum_{k=1}^{N-1} \sum_{l \in [N-1] \setminus \{k\}} \bm{u}_k \cdot \bm{u}_l \left( 1/ \tau^{(N)}_{k,l} + 1  / \tau^{(N)}_{k,N} \tau^{(N)}_{l,N} \right) } \right).\label{final_expr}
\end{align}
Comparing (\ref{cos_sim_rewrite_1}) with (\ref{final_expr}) shows that the temperature parameters must satisfy the following equation:
\begin{align} \label{tem_rel}
1/\tau^{(N-1)}_{k,l} = 1/ \tau^{(N)}_{k,l} + 1  / \tau^{(N)}_{k,N} \tau^{(N)}_{l,N} ,\quad l,\,k \in [N-1]
\end{align}
Since temperature parameters are positive, equality~(\ref{tem_rel}) implies that the temperature parameters must satisfy the following inequality: $\tau^{(N-1)}_{k,l} < \tau^{(N)}_{k,l}$.

\section{\texorpdfstring{Effect of the Weighting Coefficient $\alpha_{\bm{j}}$ in a Toy Example}{Effect of Weighting Coefficient alpha{{j}} in a Toy Example}}
\label{toy_exam}
Figure~\ref{fig:loss_terms} illustrates how $\alpha_{\bm{j}}$, $\bm{j} \in \mathcal{J}^n$, impacts the terms in our proposed Muscle loss function. Consider a batch of three images consisting of a dog (image 1), a cat (image 2), and a bench (image~3). The images are selected such that images 1 and 2 are more similar than images 2 and 3, and images 1 and 3 are the least similar. Figure~\ref{fig:loss_terms}(a) and (b) correspond to the cases where $\alpha_{\bm{j}} = 1$, $\bm{j} \in \mathcal{J}^n$ (i.e., similar to the pairwise alignment) and where $\alpha_{\bm{j}}$, $\bm{j} \in \mathcal{J}^n$, are obtained based on our proposed Muscle loss function in~(\ref{eq:new_Loss}), respectively. $\bm{z}^{1}_1$ is the output of the representation model 1 for image 1. When $\bm{z}^{1}_1$ is considered as the anchor vector, Figure~\ref{fig:loss_terms}(a) shows that the terms corresponding to the negatives originating from images 1 and 2 (e.g., $\bm{z}_2^2 + \bm{z}_1^3$) have a greater impact on the loss function when $\alpha_{\bm{j}} = 1$, $\bm{j} \in \mathcal{J}^n$. However, this is not the case in Figure~\ref{fig:loss_terms}(b). Specifically, $\alpha_{\bm{j}}$ tends to have a higher value when the negatives originate from images~1 and 3 (e.g., $\bm{z}_1^2 + \bm{z}_3^3$), which are more dissimilar. Therefore, in the Muscle loss function, $\alpha_{\bm{j}}$ can adjust the importance of the terms in the loss function based on the dissimilarity among the negatives.

\begin{figure}[t]
  \centering
   \includegraphics[trim=2.1cm 2.1cm 2.7cm 0.7cm,clip,scale=0.8]{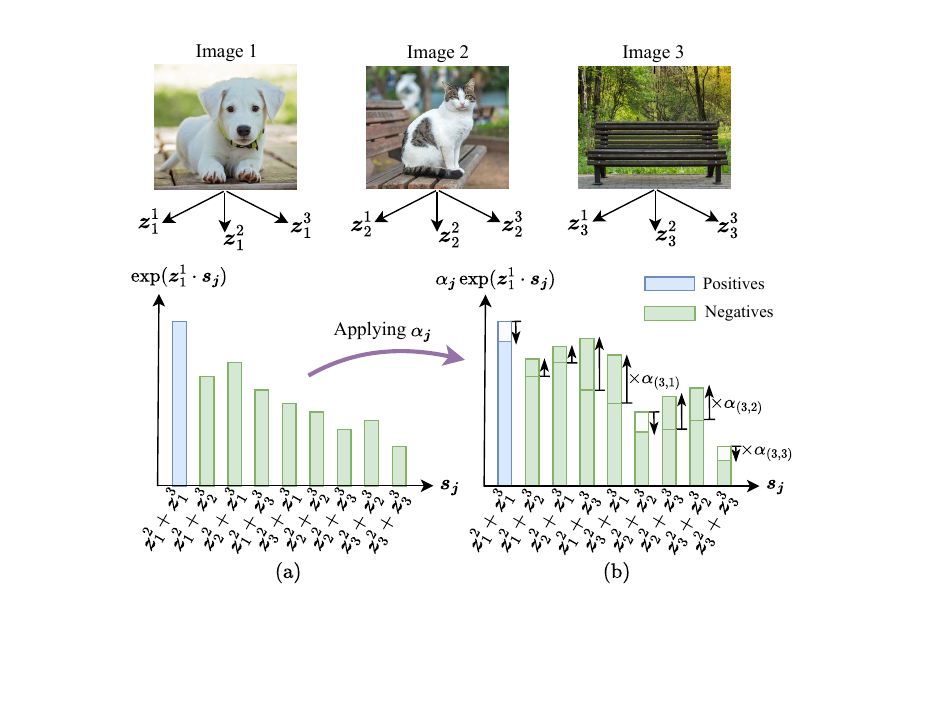}
   \caption{Illustration of the impact of $\alpha_{\bm{j}}$, $\bm{j} \in \mathcal{J}^n$ on different terms of our proposed Muscle loss function in~(\ref{eq:new_Loss}) for a batch of three images: (a)~when $\alpha_{\bm{j}}=1$, and (b)~when $\alpha_{\bm{j}}$ is determined based on the dependencies among outputs of representation models. For ease of visualization, $\tau^{(3)}_{n,m}$ is set to one for $n,\,m \in \{1,2,3\}$. }
   \label{fig:loss_terms}
   \vspace{-0.2cm}
\end{figure}

\newpage


\section{Proof of Theorem~\ref{Theo1}}
\label{sec: proof}

Based on equation~(\ref{general_def_expected}), we have 
\begin{align} 
\mathbb{E}\mathcal{L}_{\textrm{Muscle}}^{n}(\bm{z}^{n}_i)=&-\mathbb{E}\log{\frac{f(\bm{z}^{n}_i,\, \{\bm{z}^{m}_i\}_{m=1,\, m \neq n}^{N})}{\sum_{\bm{j} \in \mathcal{J}^n }{f(\bm{z}^{n}_i,\, \{\bm{z}^{m}_{j_m}\}_{m=1,\, m \neq n}^{N})}}} \nonumber \\   
\stackrel{\textrm{(a)}}=& -\mathbb{E}\log{\left(\frac{\frac{p\left(\bm{z}^{n}_i,\,\{\bm{z}^{m}_{i}\}_{m=1,\, m \neq n}^{N}\right)}{p\left(\bm{z}^{n}_i\right) p\left({\{\bm{z}^{m}_{i}\}_{m=1,\, m \neq n}^{N} }\right)}}{\sum_{\bm{j} \in \mathcal{J}^n}\frac{p\left(\bm{z}^{n}_i,\,\{\bm{z}^{m}_{j_m}\}_{m=1,\, m \neq n}^{N}\right)}{p\left(\bm{z}^{n}_i\right) p\left({\{\bm{z}^{m}_{j_m}\}_{m=1,\, m \neq n}^{N} }\right)}}\right)} \nonumber \\
=& -\mathbb{E}\log{\left(\frac{\frac{p\left(\bm{z}^{n}_i,\,\{\bm{z}^{m}_{i}\}_{m=1,\, m \neq n}^{N}\right)}{p\left(\bm{z}^{n}_i\right) p\left({\{\bm{z}^{m}_{i}\}_{m=1,\, m \neq n}^{N} }\right)}}{\frac{p\left(\bm{z}^{n}_i,\,\{\bm{z}^{m}_{i}\}_{m=1,\, m \neq n}^{N}\right)}{p\left(\bm{z}^{n}_i\right) p\left({\{\bm{z}^{m}_{i}\}_{m=1,\, m \neq n}^{N} }\right)} + \sum_{\bm{j} \in \mathcal{J}^n \setminus (i,\dots,i)}\frac{p\left(\bm{z}^{n}_i,\,\{\bm{z}^{m}_{j_m}\}_{m=1,\, m \neq n}^{N}\right)}{p\left(\bm{z}^{n}_i\right) p\left({\{\bm{z}^{m}_{j_m}\}_{m=1,\, m \neq n}^{N} }\right)}}\right)} \nonumber \\
=\,& \mathbb{E} \log\!\left(\!\!{1 + \frac{{p\left(\bm{z}^{n}_i \right) p\left({\{\bm{z}^{m}_{i}\}_{m=1,\, m \neq n}^{N} }\right)}}{{p\left(\bm{z}^{n}_i,\,\{\bm{z}^{m}_{i}\}_{m=1,\, m \neq n}^{N}\right)}}{\!\!\sum_{\bm{j} \in \mathcal{J}^n \setminus (i,\dots,i)}\frac{p\left(\bm{z}^{n}_i,\,\{\bm{z}^{m}_{j_m}\}_{m=1,\, m \neq n}^{N}\right)}{p\left(\bm{z}^{n}_i\right) p\left({\{\bm{z}^{m}_{j_m}\}_{m=1,\, m \neq n}^{N}}\right)}} }\!\right) \nonumber \\
\thickapprox\, & \mathbb{E} \log\left({1 + \frac{{p\left(\bm{z}^{n}_i\right) p\left({\{\bm{z}^{m}_{i}\}_{m=1,\, m \neq n}^{N} }\right)}}{{p\left(\bm{z}^{n}_i,\,\{\bm{z}^{m}_{i}\}_{m=1,\, m \neq n}^{N} \right)}}{\left|\mathcal{J}^n\right| \mathbb{E} \frac{p\left(\{\bm{z}^{m}_{j_m}\}_{m=1,\, m \neq n}^{N}\,\big|\, \bm{z}^{n}_i\right)}{ p\left({\{\bm{z}^{m}_{j_m}\}_{m=1,\, m \neq n}^{N}}\right)}} }\right) \nonumber \\
\stackrel{\textrm{(b)}}=\, & \mathbb{E} \log\left({1 + \frac{{p\left(\bm{z}^{n}_i\right) p\left({\{\bm{z}^{m}_{i}\}_{m=1,\, m \neq n}^{N} }\right)}}{{p\left(\bm{z}^{n}_i,\,\{\bm{z}^{m}_{i}\}_{m=1,\, m \neq n}^{N} \right)}}{B^{N-1} } }\right) \nonumber \\
\geq\,& (N-1) \log(B) - \mathbb{E} \log \left(\frac{{p\left(\bm{z}^{n}_i,\,\{\bm{z}^{m}_{i}\}_{m=1,\, m \neq n}^{N} \right)}}{{p\left(\bm{z}^{n}_i\right) p\left({\{\bm{z}^{m}_{i}\}_{m=1,\, m \neq n}^{N} }\right)}} \right) \nonumber \\
\stackrel{\textrm{(c)}} = \,& (N-1) \log(B) - I(\bm{z}^{n}_i; \{\bm{z}^{m}_i \}_{m=1,\, m \neq n}^{N}), \label{IE_proof}
 \end{align}
where equality (a) results from~(\ref{proprtionality}). Equality (b) is obtained by using $\left|\mathcal{J}^n\right| = B^{N-1}$. Inequality~(c) results from the definition of $I(\bm{z}^{n}_i; \{\bm{z}^{m}_i \}_{m=1, \, m \neq n}^{N})$. Finally, inequality~(\ref{eq:lower_bound}) is concluded by rearranging the terms in~(\ref{IE_proof}).


\section{\texorpdfstring{Discussion on the MI Related to $\mathcal{L}_{\textrm{Muscle}}^{n}(\bm{z}^{n}_i)$ and $\mathcal{L}_{\textrm{Pairwise}}^{n}(\bm{z}^{n}_i)$}{Discussion on the MI Related to Muscle and Pairwise Alignment}}
\label{MI_dis}
It is shown by~\citet{oord2018representation,tian2020contrastive} that $\mathcal{L}_{\textrm{InfoNCE}}^{n,m}(\bm{z}^{n}_i)$ in~(\ref{infoNCE_loss}) is related to the MI $I(\bm{z}^{n}_i; \bm{z}^{m}_i)$ as follows:
\begin{align} \label{eq:lower_bound_info}
I(\bm{z}^{n}_i; \bm{z}^{m}_i) \geq \log(B) - \mathbb{E} \mathcal{L}_{\textrm{InfoNCE}}^{n,m}(\bm{z}^{n}_i).
\end{align}
Using (\ref{eq:lower_bound_info}), for $\mathcal{L}_{\textrm{Pairwise}}^{n}(\bm{z}^{n}_i)$, we have:
\begin{align} \label{eq:lower_bound_naive}
\mathbb{E}\mathcal{L}_{\textrm{Pairwise}}^{n}(\bm{z}^{n}_i)=& \sum_{m\in [N] \setminus \{n\}} \mathbb{E}\mathcal{L}_{\textrm{InfoNCE}}^{n,m}(\bm{z}^{n}_i) \nonumber \\
\geq&\, (N-1)\log(B) - \sum_{m\in [N] \setminus \{n\}}{I(\bm{z}^{n}_i; \bm{z}^{m}_i)}.
\end{align}
By rearranging the terms in~(\ref{eq:lower_bound_naive}), we have the following inequality for the pairwise alignment:
\begin{align} \label{eq:bound_naive}
\sum_{m\in [N] \setminus \{n\}}{I(\bm{z}^{n}_i; \bm{z}^{m}_i)} \geq (N-1)\log(B) - \mathbb{E}\mathcal{L}_{\textrm{Pairwise}}^{n}(\bm{z}^{n}_i).
\end{align}
Based on inequality~(\ref{eq:bound_naive}), minimizing $\mathcal{L}_{\textrm{Pairwise}}^{n}(\bm{z}^{n}_i)$ is equivalent to maximizing the MI $\sum_{m\in [N] \setminus \{n\}}{I(\bm{z}^{n}_i; \bm{z}^{m}_i)}$. This pairwise MI formulation lacks conditional MI. In other words, the other representation vectors $\bm{z}^{m'}_i$, where $m' \in [N] \setminus \{m,\,n\}$, provide no additional information about the pair $(\bm{z}^{n}_i, \bm{z}^{m}_i)$. However, we know that all the positives $\bm{z}^{n}_i$, $n \in [N]$, originate from the same data sample~$\bm{x}_i \in \mathcal{D}$, and thus, they should share some joint dependencies. This MI perspective further confirms that pairwise alignment approaches cannot fully capture the dependencies among the representation vectors in multi-model scenarios.

On the other hand, as shown in Theorem~\ref{Theo1}, minimizing our proposed Muscle loss function is equivalent to maximizing the MI $I(\bm{z}^{n}_i; \{\bm{z}^{m}_i \}_{m=1,\, m \neq n}^{N})$. Figure~\ref{fig:ven} illustrates that $I(\bm{z}^{n}_i; \{\bm{z}^{m}_i\}_{m=1,\,m \neq n}^{N})$ and {$ \sum_{m \in [N] \setminus \{n\}}I(\bm{z}^{n}_i;\bm{z}^{m}_i)$} are not generally equal. In particular, by using the chain rule for MI~\citep{thomas2006elements}, we have: $I(\bm{z}^{n}_i; \{\bm{z}^{m}_i\}_{m=1,\,m \neq n}^{N}) = \sum_{m=1}^{N}{I(\bm{z}^{n}_i; \bm{z}^{m}_i \,|\, \{\bm{z}^{m'}_i\}_{m'=1,\,m' \neq n}^{m-1})}$. The conditional MI in this formulation captures the dependencies among all the representation vectors $\bm{z}^{n}_i$, $n \in [N]$, thereby facilitating more effective knowledge transfer among them.

\begin{figure}[t]
  \centering
   \includegraphics[trim=3.2cm 0.3cm 2.9cm 0.0cm,clip,scale=0.5]{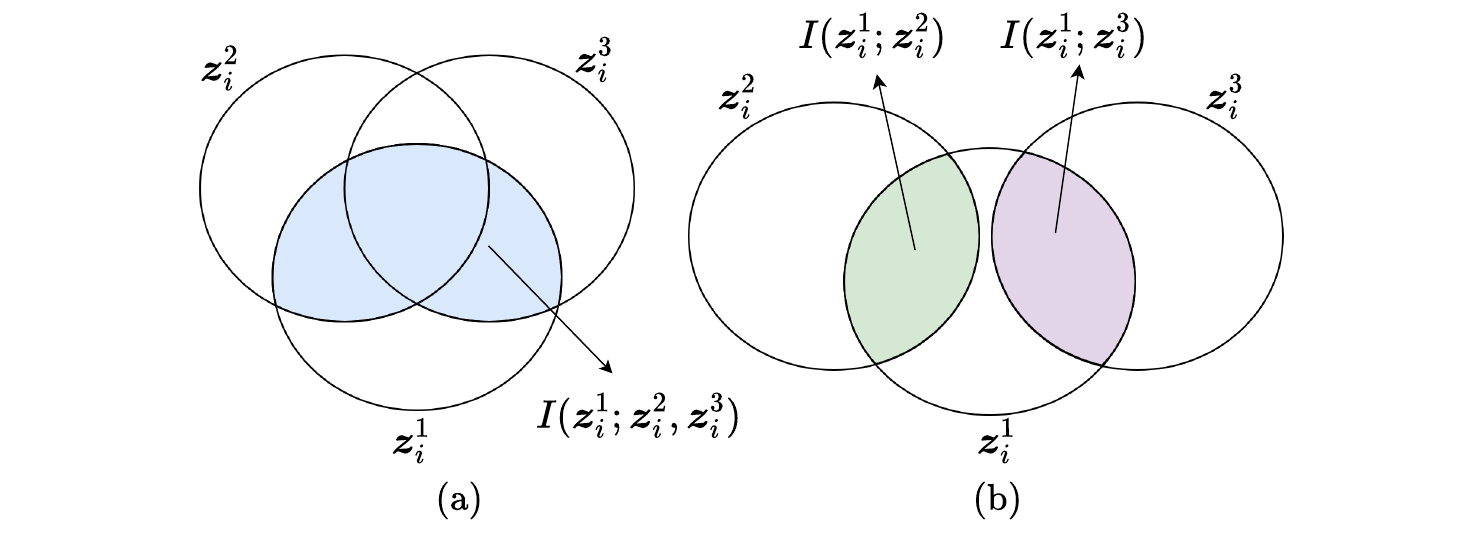}
   \vspace{-0.1cm}
   \caption{Venn diagrams illustrating the MI $I(\bm{z}^{1}_i;\bm{z}^{2}_i, \bm{z}^{3}_i)$ among three variables $\bm{z}^{1}_i$, $\bm{z}^{2}_i$, and $\bm{z}^{3}_i$. (a) In general, $I(\bm{z}^{1}_i;\bm{z}^{2}_i, \bm{z}^{3}_i) = I(\bm{z}^{1}_i;\bm{z}^{2}_i) + I(\bm{z}^{1}_i; \bm{z}^{3}_i\,|\,\bm{z}^{2}_i)$. Since $\bm{z}^{1}_i$, $\bm{z}^{2}_i$, and $\bm{z}^{3}_i$ are not independent of one another, $I(\bm{z}^{1}_i;\bm{z}^{2}_i, \bm{z}^{3}_i)$ does not equal $I(\bm{z}^{1}_i;\bm{z}^{2}_i) + I(\bm{z}^{1}_i; \bm{z}^{3}_i)$. (b) A special case where $I(\bm{z}^{1}_i;\bm{z}^{2}_i, \bm{z}^{3}_i) = I(\bm{z}^{1}_i;\bm{z}^{2}_i) + I(\bm{z}^{1}_i; \bm{z}^{3}_i)$. However, this case cannot occur because $\bm{z}^{1}_i$, $\bm{z}^{2}_i$, and $\bm{z}^{3}_i$ are positives originating from the same data sample~$\bm{x}_i$. Since our proposed Muscle loss function is related to $I(\bm{z}^{1}_i;\bm{z}^{2}_i, \bm{z}^{3}_i)$, it more effectively captures dependencies between the representation vectors.}
   \label{fig:ven}
   \vspace{-0.5cm}
\end{figure}


\section{Details of the Local Training and Test Datasets in Experiments}
\label{sec:data_exp}
We consider that users employ heterogeneous pre-trained FMs tailored to their respective tasks. These pre-trained FMs can be fine-tuned for users' specific tasks, allowing the size of local datasets to remain small. Additionally, we assign different local dataset sizes to users based on the complexity of their tasks. Specifically:
\begin{itemize}
    \item Each user performing the IC10 task has 50 training samples and 4000 test samples.
    \item Each user performing the IC100 task has 100 training samples and 2000 test samples.
    \item Each user performing the MLC task has 200 training samples and 1000 test samples.
    \item Each user performing the SS task has 1000 training samples and 3000 test samples.
    \item Each user performing the TC task has 100 training samples and 5000 test samples.
\end{itemize}
The data samples are uniformly divided across users with the same task.

\section{Details of the Selected Models and Assigned Tasks for Users}
\label{sec:imp_exp}
In Setup1, we consider six users and assign their tasks and FMs as follows: 
\begin{itemize}
    \item Users 1, 2, and 3 perform the MLC task using ViT-Base, ViT-Small, and ViT-Large, respectively.
    \item Users 4 and 5 perform the IC100 task using ViT-Base and ViT-Small, respectively.
    \item User 6 performs the IC10 task using ViT-Tiny.
\end{itemize}
In Setup2, in addition to the six users from Setup1, we consider four more users with the following tasks and FMs: 
\begin{itemize}
    \item Users 7 and 8 perform the SS task using SegFormer-B0 and SegFormer-B1, respectively.
    \item Users 9 and 10 perform the TC task using BERT-Base and DistilBERT-Base, respectively.
\end{itemize}
For users performing the MLC, IC100, and IC10 tasks, we use two linear layers with a rectified linear unit (ReLU) in between as the task-specific prediction head. For users performing the SS task, we use the multi-layer perceptron (MLP) decoder proposed by~\citet{xie2021segformer} as their task-specific prediction head. For users performing the TC task, we use a linear layer as the task-specific prediction head. 

\section{{\texorpdfstring{Impact of Decreasing Muscle Loss on $\Delta$}{Impact of Decreasing Muscle Loss on Delta}}}
\label{sec: training_curve}

\begin{figure}[h]
        \centering
        \includegraphics[width=0.7\linewidth]{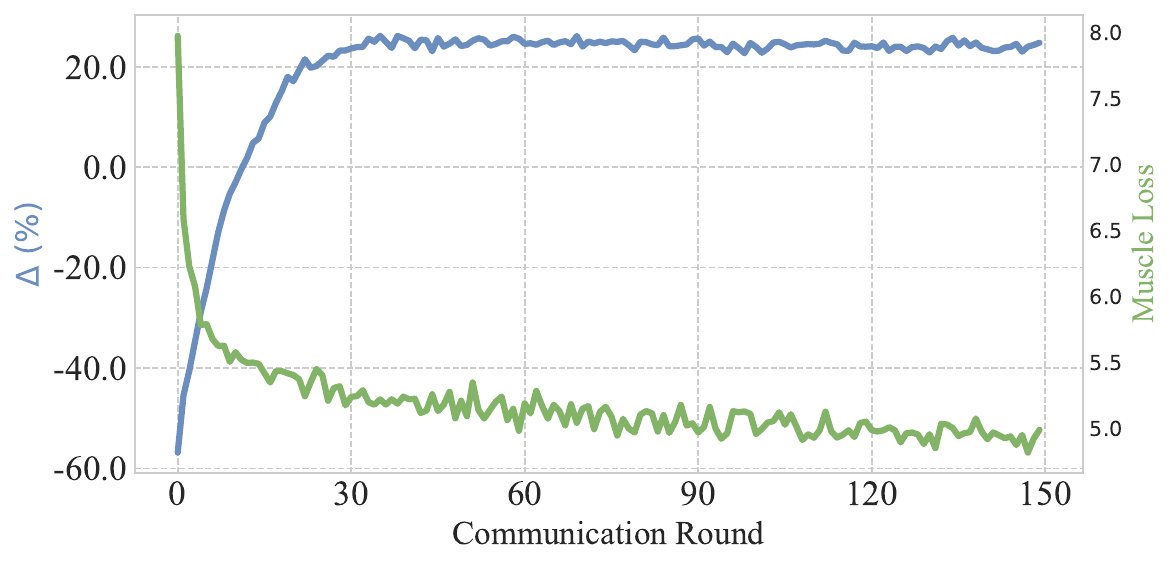}
        \caption{{Muscle loss and $\Delta$ evolution over communication rounds. Pascal VOC is used as the public dataset.}}
        \label{fig:training}
\end{figure}

\section{{Performance of FedMuscle Using a Synthetic Public Dataset}}
\label{sec: synthetic}

{To generate a synthetic public dataset, we follow existing works such as FedDF~\citep{lin2020ensemble} and employ pre-trained generative models. In particular, we consider that each user generates 50 synthetic images using a pre-trained image-to-image diffusion pipeline based on Stable Diffusion~\citep{rombach2022high}, augmented with ControlNet image conditioning~\citep{zhang2023adding} to enable controllable generation aligned with each user's local data. Figure~\ref{fig:synthetic} shows one of the synthetic image samples generated by each user in Setup1. The 50 synthetic images generated by each user are then combined to form a shared public dataset consisting of 300 unlabeled synthetic samples. The results in Table~\ref{SyntheticResults} demonstrate that FedMuscle can effectively benefit from a synthetic public dataset to align users' representation spaces. Thus, FedMuscle remains effective even when the samples in the shared public dataset are entirely synthetic. These results confirm that synthetic datasets can serve as a viable and practical alternative to real public datasets for FedMuscle.}

\begin{figure}[t]
        \centering
        \includegraphics[width=0.6\linewidth]{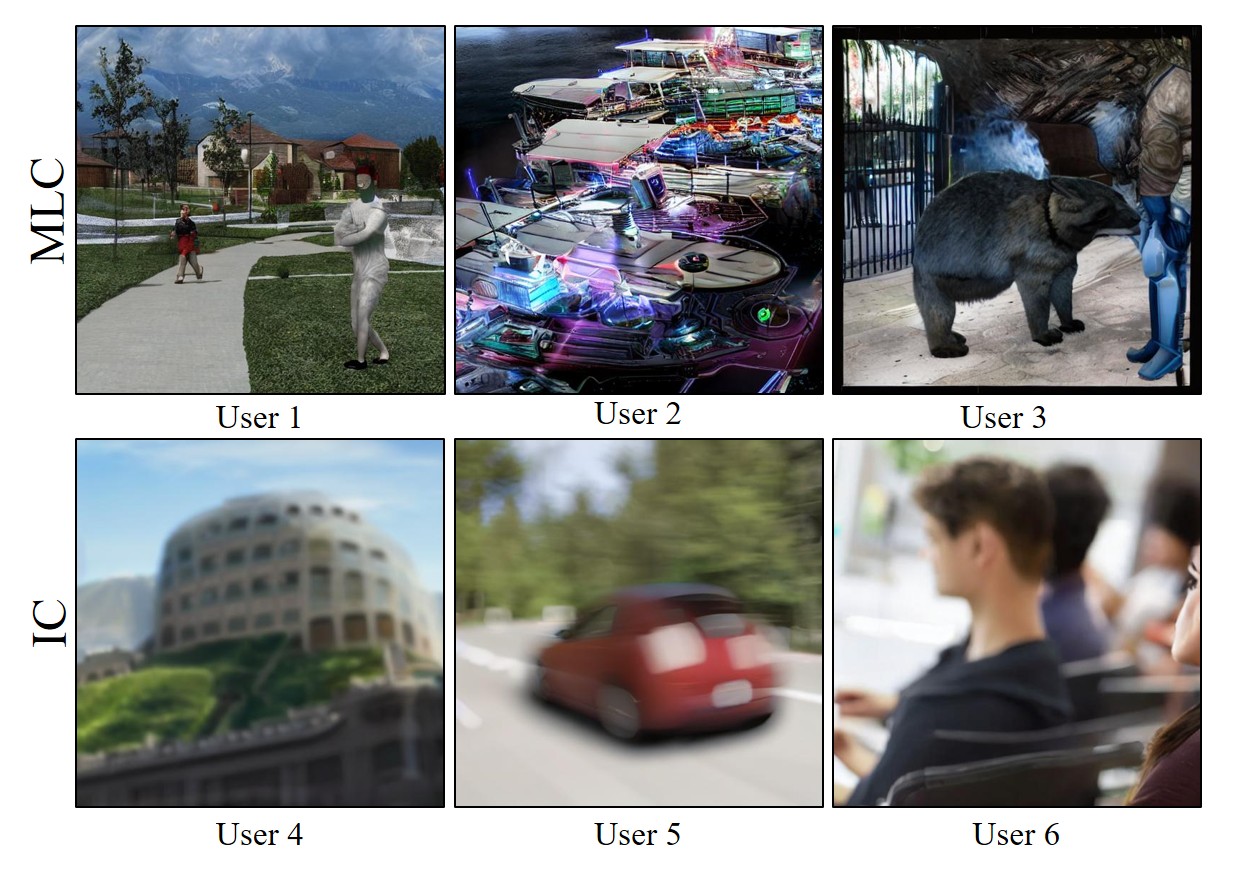}
        \caption{{Illustration of a synthetically generated sample for each user in Setup1.}}
        \label{fig:synthetic}
\end{figure}

\begin{table*}[ht]
\caption{{Performance of FedMuscle in Setup1 when the public dataset is obtained by generating 300 synthetic data samples.}}
\vspace{-0.3cm}
\centering
\label{SyntheticResults}
\scalebox{0.575}
{{\begin{tabular}{c c c c c c}
\\\toprule
\multicolumn{1}{c}{\multirow{1}{*}{\makecell{User \#}}}
&\multicolumn{1}{c}{\multirow{1}{*}{Model}}
&\multicolumn{1}{c}{\multirow{1}{*}{Task}}
&\multicolumn{1}{c}{\multirow{1}{*}{\makecell{Eval. Metric}}}
&\multicolumn{1}{c}{\multirow{1}{*}{\makecell{FedMuscle (Ours)}}}
&\multicolumn{1}{c}{\multirow{1}{*}{\makecell{Local Training}}}
\\\midrule
1 & ViT-Base & MLC & micro-F1 & 44.97\tiny{$\pm$ 0.29}  &  42.17\tiny{$\pm$ 0.24}\\\cline{1-6}
2 & ViT-Small & MLC & micro-F1 & 49.20\tiny{$\pm$ 0.80} & 43.67\tiny{$\pm$ 0.59}\\\cline{1-6}
3 & ViT-Large & MLC & micro-F1  & 46.90\tiny{$\pm$ 0.16} & 42.93\tiny{$\pm$ 0.41}\\\cline{1-6}
4 & ViT-Base & IC100 & Accuracy  & 28.90\tiny{$\pm$ 0.70} & 24.77\tiny{$\pm$ 0.42}\\\cline{1-6}
5 & ViT-Small & IC100 & Accuracy  & 30.47\tiny{$\pm$ 0.45}  & 24.70\tiny{$\pm$ 0.36}\\\cline{1-6}
6 & ViT-Tiny  & IC10 & Accuracy & 61.87\tiny{$\pm$ 0.33} & 43.77\tiny{$\pm$ 0.62}\\\cline{1-6}
\multicolumn{4}{c}{\cellcolor{lightgray!60}$\Delta$ (\%) $\uparrow$} &\cellcolor{lightgray!60} \textbf{+18.32} &\cellcolor{lightgray!60} 0.00
\\ \bottomrule
\end{tabular}}}
\end{table*}


\section{Impact of Muscle Loss Function in FedMuscle}
\label{sec: loss_comp}

\begin{table}[H]
\caption{Performance of FedMuscle in Setup1 using our proposed Muscle loss function compared to the Gramian-based contrastive loss and pairwise alignment.}
\vspace{-0.3cm}
\centering
\scalebox{0.575}
{{\begin{tabular}{c c c c c c c c c}
\\\toprule
\multicolumn{1}{c}{\multirow{2}{*}{\makecell{Public\\Dataset}}}
&\multicolumn{1}{c}{\multirow{2}{*}{\makecell{User\\ \#}}}
&\multicolumn{1}{c}{\multirow{2}{*}{Model}}
&\multicolumn{1}{c}{\multirow{2}{*}{Task}}
&\multicolumn{1}{c}{\multirow{2}{*}{\makecell{Eval.\\Metric}}}
&\multicolumn{3}{c}{FedMuscle}
&\multicolumn{1}{c}{\multirow{2}{*}{\makecell{Local\\ Training}}} \\\cline{6-8}
 & & & &
&\multicolumn{1}{c}{\multirow{1}{*}{Muscle Loss (Ours)}} 
&\multicolumn{1}{c}{\multirow{1}{*}{Gramian-based Loss}} 
&\multicolumn{1}{c}{\multirow{1}{*}{Pairwise Alignment}}
&
\\\midrule
\multirow{7}{*}{\rotatebox{90}{Pascal VOC}} & 1 & ViT-Base & MLC & micro-F1 &46.33\tiny{$\pm$ 0.12} & 48.10\tiny{$\pm$ 0.45}  &41.90\tiny{$\pm$ 0.08}  &  42.17\tiny{$\pm$ 0.24}\\\cline{2-9}
& 2 & ViT-Small & MLC & micro-F1 &49.77\tiny{$\pm$ 0.29} & 49.43\tiny{$\pm$ 0.33}  &48.07\tiny{$\pm$ 0.33}  & 43.67\tiny{$\pm$ 0.59}\\\cline{2-9}
& 3 & ViT-Large & MLC & micro-F1  &49.40\tiny{$\pm$ 0.50} & 49.07\tiny{$\pm$ 0.42}  &44.57\tiny{$\pm$ 0.49}  & 42.93\tiny{$\pm$ 0.41}\\\cline{2-9}
& 4 & ViT-Base & IC100 & Accuracy  &36.67\tiny{$\pm$ 0.34} & 34.83\tiny{$\pm$ 0.87}  &33.53\tiny{$\pm$ 0.17}  & 24.77\tiny{$\pm$ 0.42}\\\cline{2-9}
& 5 & ViT-Small & IC100 & Accuracy  &29.93\tiny{$\pm$ 0.54} & 29.43\tiny{$\pm$ 0.40}  &28.20\tiny{$\pm$ 0.36}  & 24.70\tiny{$\pm$ 0.36}\\\cline{2-9}
& 6 & ViT-Tiny  & IC10 & Accuracy &66.57\tiny{$\pm$ 1.01} & 62.47\tiny{$\pm$ 1.80} &61.83\tiny{$\pm$ 0.98}  & 43.77\tiny{$\pm$ 0.62}\\\cline{2-9}
& \multicolumn{4}{c}{\cellcolor{lightgray!60}$\Delta$ (\%) $\uparrow$} &\cellcolor{lightgray!60} \textbf{+26.70} & \cellcolor{lightgray!60}+24.01  &\cellcolor{lightgray!60} +17.34  &\cellcolor{lightgray!60} 0.00
\\\midrule
\multirow{7}{*}{\rotatebox{90}{COCO}} & 1 & ViT-Base & MLC & micro-F1 & 49.10\tiny{$\pm$ 0.45} & 48.17\tiny{$\pm$ 1.11} &42.27\tiny{$\pm$ 0.33}  &  42.17\tiny{$\pm$ 0.24}\\\cline{2-9}
& 2 & ViT-Small & MLC & micro-F1  & 51.30\tiny{$\pm$ 0.22} & 49.57\tiny{$\pm$ 0.59} &48.53\tiny{$\pm$ 0.54}  & 43.67\tiny{$\pm$ 0.59}\\\cline{2-9}
& 3 & ViT-Large & MLC & micro-F1  & 50.60\tiny{$\pm$ 0.36} & 49.67\tiny{$\pm$ 0.05} &44.67\tiny{$\pm$ 0.12}  & 42.93\tiny{$\pm$ 0.41}\\\cline{2-9}
& 4 & ViT-Base & IC100 & Accuracy  & 37.27\tiny{$\pm$ 0.78} & 35.20\tiny{$\pm$ 0.16} &35.13\tiny{$\pm$ 0.31}  & 24.77\tiny{$\pm$ 0.42}\\\cline{2-9}
& 5 & ViT-Small & IC100 & Accuracy  & 30.93\tiny{$\pm$ 0.12} & 29.30\tiny{$\pm$ 1.65} &29.20\tiny{$\pm$ 0.57}  & 24.70\tiny{$\pm$ 0.36}\\\cline{2-9}
& 6 & ViT-Tiny  & IC10 & Accuracy & 63.23\tiny{$\pm$ 0.58} & 56.77\tiny{$\pm$ 1.23} &61.10\tiny{$\pm$ 1.91}  & 43.77\tiny{$\pm$ 0.62}\\\cline{2-9}
& \multicolumn{4}{c}{\cellcolor{lightgray!60}$\Delta$ (\%) $\uparrow$} &\cellcolor{lightgray!60}  \textbf{+28.65} &\cellcolor{lightgray!60} +22.31 &\cellcolor{lightgray!60} +19.18  &\cellcolor{lightgray!60} 0.00
\\\midrule
\multirow{7}{*}{\rotatebox{90}{CIFAR-100}} & 1 & ViT-Base & MLC & micro-F1 & 42.33\tiny{$\pm$ 0.05} & 44.13\tiny{$\pm$ 0.79} &41.20\tiny{$\pm$ 0.22}  & 42.17\tiny{$\pm$ 0.24} \\\cline{2-9}
& 2 & ViT-Small & MLC & micro-F1  & 46.50\tiny{$\pm$ 0.14} & 45.80\tiny{$\pm$ 0.73} &46.03\tiny{$\pm$ 0.33}  & 43.67\tiny{$\pm$ 0.59}\\\cline{2-9}
& 3 & ViT-Large & MLC & micro-F1  & 45.63\tiny{$\pm$ 0.68} & 46.53\tiny{$\pm$ 0.33} &43.47\tiny{$\pm$ 0.26}  & 42.93\tiny{$\pm$ 0.41}\\\cline{2-9}
& 4 & ViT-Base & IC100 & Accuracy  & 33.43\tiny{$\pm$ 0.21} & 31.70\tiny{$\pm$ 0.54} &31.20\tiny{$\pm$ 0.45}  & 24.77\tiny{$\pm$ 0.42}\\\cline{2-9}
& 5 & ViT-Small & IC100 & Accuracy & 29.60\tiny{$\pm$ 0.71} & 28.47\tiny{$\pm$ 0.66} &25.60\tiny{$\pm$ 1.23}  &  24.70\tiny{$\pm$ 0.36}\\\cline{2-9}
& 6 & ViT-Tiny  & IC10 & Accuracy  & 58.37\tiny{$\pm$ 0.57} & 56.90\tiny{$\pm$ 2.57} &56.43\tiny{$\pm$ 0.78}  & 43.77\tiny{$\pm$ 0.62}\\\cline{2-9}
& \multicolumn{4}{c}{\cellcolor{lightgray!60}$\Delta$ (\%) $\uparrow$} &\cellcolor{lightgray!60}  \textbf{+16.88} &\cellcolor{lightgray!60}+15.19 &\cellcolor{lightgray!60}+10.48  &\cellcolor{lightgray!60} 0.00
\\ \bottomrule
\end{tabular}}}
\end{table}


\section{Features Offered by the Muscle Loss}
\label{sec: Muscle_loss_features}

In this section, we provide a detailed discussion of the features offered by our proposed Muscle loss that are not available in the recently proposed Gramian-based contrastive loss by \citet{cicchetti2024gramian}.

\begin{itemize}
\item \textbf{Theoretical foundation}: Our proposed Muscle loss is derived from a strong theoretical analysis, whereas the Gramian-based contrastive loss, though effective, is based on intuition and lacks theoretical justification.

\item \textbf{Consistency with InfoNCE}: For two modalities or models, the Muscle loss reduces to the standard InfoNCE loss, demonstrating that it is a well-structured extension. In contrast, the Gramian-based contrastive loss is not equivalent to the InfoNCE loss in the two-modality (or two-model) case.

\item \textbf{Well-definedness}: The Gramian-based contrastive loss is defined as follows~\citep{cicchetti2024gramian}: $\mathcal{L}_{\textrm{Gramian}}^{n}(\bm{z}^{n}_i) =
-\log{\frac{ \exp(-\textrm{Vol}({ \bm{z}^{n}_i,\,\bm{z}^{1}_i,\,\ldots,\,\bm{z}^{n-1}_i,\bm{z}^{n+1}_i,\,\ldots,\,\bm{z}^{N}_i})/ \tau)}{ \sum_{j=1 }^{K} { \exp(-\textrm{Vol}({ \bm{z}^{n}_i,\,\bm{z}^{1}_j,\,\ldots,\,\bm{z}^{n-1}_j,\bm{z}^{n+1}_j,\,\ldots,\,\bm{z}^{N}_j})/ \tau)}}}$, where $\textrm{Vol}$ denotes the volume of the $N$-dimensional parallelotope determined by the determinant of the Gramian matrix. However, in the case of $N > d$, the determinant of the Gramian matrix is zero. Although this condition may not hold in most realistic scenarios, it shows that the Gramian-based contrastive loss is not a well-structured extension for multi-model settings. The Muscle loss does not have such limitations and remains well-defined regardless of the relationship between $N$ and $d$.

\item \textbf{Insights into temperature parameters}: Our proposed Muscle loss provides new insights into the temperature parameters. Based on the analysis provided in Appendix~\ref{temp_rel}, we know, for example, that for three models, the temperature parameters for each pair should be larger than those used for the same pair in a two-model setting.

\item \textbf{Control on negative transfer}: Some tasks or modalities may be more semantically related to each other than others. Low semantic relevance can lead to negative transfer and degrade performance. The temperature parameters {\small$\tau^{(N)}_{n,m}$}, defined between each pair of models in the Muscle loss, provide a flexible and principled mechanism to mitigate negative transfer among tasks with low semantic relevance. This is because semantically closer tasks or modalities can benefit from stronger alignment (lower temperature), while semantically distant ones require softer alignment (higher temperature). In contrast, the Gramian-based contrastive loss uses a single temperature parameter $\tau$ and lacks a mechanism to mitigate negative transfer among tasks or modalities with low relevance.

\item \textbf{Modularity and integration}: The structure of the Muscle loss enables modular integration into FL settings. As demonstrated in Section~\ref{sec: fedmuscle}, when using the Muscle loss, each user transmits only its representation matrices and offloads to the server the part of the loss computation that depends on the representation matrices of other users. The Gramian-based contrastive loss, however, lacks such modularity, resulting in higher computational costs for users.
\end{itemize}


\section{Integration of Muscle Loss into CreamFL}
\label{sec:cream}

In the CreamFL~\citep{yu2023multimodal} setup, the local models used by the users are as follows:
\begin{itemize}
\item ResNet-18~\citep{he2016deep} is used by the uni-modal image users.
\item GRU~\citep{chung2014empirical} is used by the uni-modal text users.
\item ResNet-18 and GRU serve as the image and text encoders, respectively, for the multi-modal users.
\end{itemize}
The global model at the server is designed to be larger than those used by the users. In particular, the server employs ResNet-101 and BERT as the image and text encoders, respectively. All models are trained from scratch. In CreamFL, only the representations of the public data obtained from the models are communicated between the users and the server. Specifically, the server generates global image and text representations using the global model and sends them to the users.\par

Each user first trains its local model using its local dataset and then applies a local contrastive regularization method. In this step, each user employs CL with the InfoNCE loss to align the representations of the public data generated by its own model (e.g., local image representations) with the global representations of the other modality (e.g., global text representations). Additionally, each user aims to make the representations obtained from its own model for a given modality (e.g., image) similar to the global representations of the same modality. To this end, each user employs a loss function similar to the one proposed in MOON~\citep{li2021model}. Then, each user obtains the representations of the public data from its model and sends them to the server.\par

The server first trains the global model using the public dataset and then applies a global-local contrastive aggregation method. In this step, for each modality, the server assigns a score to the local representations received from the users and aggregates them based on these scores. To compute the score for each local representation, the server calculates the InfoNCE loss for that representation as the anchor, using the global representations of the other modality as positives and negatives. Then, for each modality, the server minimizes the $\ell_2$ distance between the global representations and the aggregated local representations.\par

The local contrastive regularization and global-local contrastive aggregation methods in CreamFL are heuristic approaches. For example, extending these methods to support more than two modalities would be challenging. We replace these heuristic approaches with our proposed Muscle loss, which can systematically capture dependencies among representations across any number of modalities. On the user side, we replace local contrastive regularization with minimization of the Muscle loss to align the local representations of each modality (e.g., image) with the global representations of all modalities (e.g., image and text). On the server side, we replace global-local contrastive aggregation with the Muscle loss to align the global representations of each modality with all local representations received from the users. For this experiment, we used the code provided by the CreamFL paper~\citep{yu2023multimodal} and retained all of their original hyperparameters.


\section{Effect of Local Epochs, Communication Rounds, and CL Epochs}
\label{sec: CR_LT}

\begin{table*}[ht]
\caption{Performance of FedMuscle under different numbers of local epochs~$E$, communication rounds~$R$, and CL epochs~$T$. The product of local epochs and communication rounds is fixed at $150$ (i.e., $E \times R = 150$). The public dataset is derived from COCO.}
\centering
\scalebox{0.575}
{{\begin{tabular}{c c c c c c c c c c c c c c}
\\\toprule
\multicolumn{1}{c}{\multirow{3}{*}{\makecell{User \\ \#}}}
&\multicolumn{1}{c}{\multirow{3}{*}{Model}}
&\multicolumn{1}{c}{\multirow{3}{*}{Task}}
&\multicolumn{1}{c}{\multirow{3}{*}{\makecell{Eval.\\Metric}}}
&\multicolumn{9}{c}{FedMuscle} & \multicolumn{1}{c}{\multirow{3}{*}{\makecell{Local\\ Training}}}\\\cline{5-13}
 & & &
&\multicolumn{3}{c}{\multirow{1}{*}{$R=1$}} 
&\multicolumn{3}{c}{\multirow{1}{*}{$R=10$}}
&\multicolumn{3}{c}{\multirow{1}{*}{$R=25$}}
&\\\cmidrule(r){5-7} \cmidrule(r){8-10} \cmidrule(r){11-13}
& & & & $T=1$& $T=3$& $T=5$& $T=1$&$T=3$ &$T=5$ & $T=1$ & $T=3$& $T=5$
\\\midrule
 1 & ViT-Base & MLC & micro-F1 &43.73\tiny{$\pm$ 0.45}  &44.13\tiny{$\pm$ 0.41}  &48.90\tiny{$\pm$ 0.14}  &47.33\tiny{$\pm$ 0.21}  &48.60\tiny{$\pm$ 0.50}  &44.20\tiny{$\pm$ 0.45}  &49.03\tiny{$\pm$ 0.33}  &49.27\tiny{$\pm$ 0.17}  &49.47\tiny{$\pm$ 0.12}  & 42.17\tiny{$\pm$ 0.24}\\\cline{1-14}
2 & ViT-Small & MLC & micro-F1  &45.30\tiny{$\pm$ 0.57}  &46.63\tiny{$\pm$ 0.60}  &46.77\tiny{$\pm$ 0.69}  &50.53\tiny{$\pm$ 0.45}  &51.23\tiny{$\pm$ 0.46}  &51.07\tiny{$\pm$ 0.19}  &51.17\tiny{$\pm$ 0.21} &51.93\tiny{$\pm$ 0.21} &51.23\tiny{$\pm$ 0.26} & 43.67\tiny{$\pm$ 0.59}\\\cline{1-14}
3 & ViT-Large & MLC & micro-F1  &44.00\tiny{$\pm$ 0.70}  &44.23\tiny{$\pm$ 0.68}  &45.30\tiny{$\pm$ 0.50}  &49.37\tiny{$\pm$ 0.09}  &50.33\tiny{$\pm$ 0.34} &50.70\tiny{$\pm$ 0.41} &51.40\tiny{$\pm$ 0.43} &51.57\tiny{$\pm$ 0.12} &51.97\tiny{$\pm$ 0.31} & 42.93\tiny{$\pm$ 0.41}\\\cline{1-14}
 4 & ViT-Base & IC100 & Accuracy  &23.83\tiny{$\pm$ 0.17}  &25.07\tiny{$\pm$ 0.53}  &26.57\tiny{$\pm$ 0.92}  &31.87\tiny{$\pm$ 0.48}  &34.60\tiny{$\pm$ 0.28} &35.13\tiny{$\pm$ 0.66}  &34.40\tiny{$\pm$ 0.16} &36.37\tiny{$\pm$ 0.12} &37.10\tiny{$\pm$ 0.22} &24.77\tiny{$\pm$ 0.42}\\\cline{1-14}
 5 & ViT-Small & IC100 & Accuracy  &26.70\tiny{$\pm$ 0.83}  &27.77\tiny{$\pm$ 0.82}  &29.50\tiny{$\pm$ 1.02}  &31.73\tiny{$\pm$ 0.68}  &30.37\tiny{$\pm$ 0.54} &30.17\tiny{$\pm$ 0.83} &31.40\tiny{$\pm$ 0.28} &30.60\tiny{$\pm$ 0.65} &30.27\tiny{$\pm$ 0.90} & 24.70\tiny{$\pm$ 0.36}\\\cline{1-14}
 6 & ViT-Tiny  & IC10 & Accuracy  &52.53\tiny{$\pm$ 2.12}  &58.20\tiny{$\pm$ 1.21}  &58.10\tiny{$\pm$ 1.88}  &66.60\tiny{$\pm$ 0.93}  &65.60\tiny{$\pm$ 0.59} &65.13\tiny{$\pm$ 0.69}  &67.53\tiny{$\pm$ 0.05} &65.07\tiny{$\pm$ 0.48} &64.07\tiny{$\pm$ 1.02} &43.77\tiny{$\pm$ 0.62}\\\cline{1-14}
 \multicolumn{4}{c}{\cellcolor{lightgray!60}$\Delta$ (\%) $\uparrow$} &\cellcolor{lightgray!60} +5.71  &\cellcolor{lightgray!60} +10.18  &\cellcolor{lightgray!60} +12.81  &\cellcolor{lightgray!60} +25.37 &\cellcolor{lightgray!60} +27.05  &\cellcolor{lightgray!60} +27.3  &\cellcolor{lightgray!60} +28.91 &\cellcolor{lightgray!60} +29.21 &\cellcolor{lightgray!60} +29.06 &\cellcolor{lightgray!60} 0.00
\\ \bottomrule
\end{tabular}}}
\end{table*}

\section{Non-IID Data Partition Among Users}
\label{sec:noniid}

\begin{figure}[H]
    \centering
    \begin{minipage}[t]{0.245\textwidth}
        \centering
        \includegraphics[width=\linewidth]{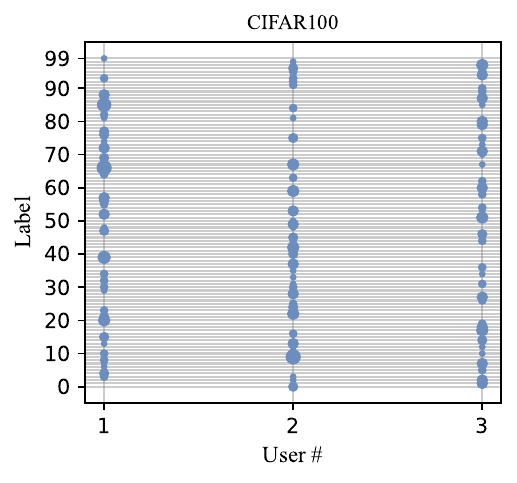}
        \vspace{-15pt}
    \end{minipage}
    \hfill 
    \begin{minipage}[t]{0.245\textwidth}
        \centering
        \includegraphics[width=0.97\linewidth]{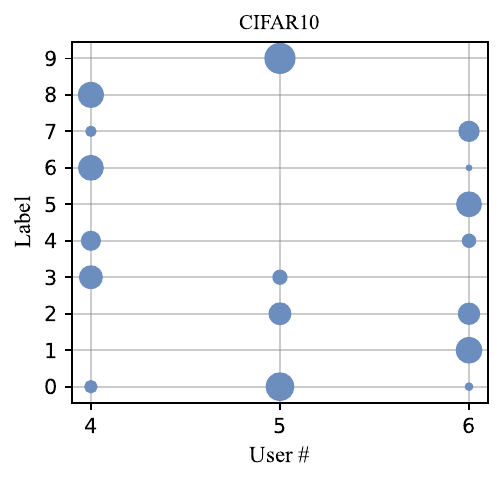}
        \vspace{-15pt}
    \end{minipage}
        \begin{minipage}[t]{0.245\textwidth}
        \centering
        \includegraphics[width=1.02\linewidth]{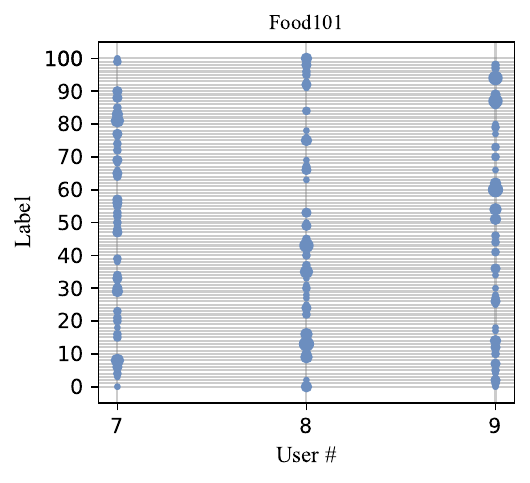}
        \vspace{-15pt}
    \end{minipage}
    \hfill 
    \begin{minipage}[t]{0.245\textwidth}
        \centering
        \includegraphics[width=1.02\linewidth]{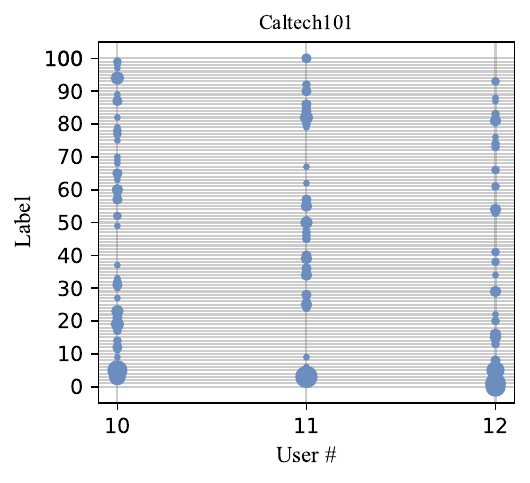}
        \vspace{-15pt}
    \end{minipage}
        \caption{{Illustration of the non-IID data distribution among users.}}
    \vspace{-0.4cm}
\end{figure}


\section{Impact of the Output Dimension {and Temperature}}
\label{sec:dim_abl}
We investigate whether tuning the output dimension of the representation models (i.e., $d$) and {the temperature parameters~(i.e., {\small$\tau^{(4)}_{n,m}$} and {\small$\tau^{(3)}_{n,m}$})} as hyperparameters can further improve the performance of FedMuscle. Note that a higher $d$ results in increased communication costs. The results in Figure~\ref{fig:dim}(a) show that the performance remains nearly constant regardless of the chosen output dimension. {Additionally, the results in Figure~\ref{fig:dim}(b) show that lower temperature values lead to better overall performance.} These findings are consistent with the results presented by~\citet{chen2020simple} and~\citet{yu2023multimodal} using the InfoNCE loss function.\par

{We also investigate the impact of increasing the temperature parameter between tasks with lower semantic relevance in Setup2. The results in Table~\ref{new_setup2} show that assigning a very large temperature value to user pairs with SS and TC tasks improves the performance of both SS users and TC users. Beyond adjusting the temperature parameters, the semantic relevance between tasks can also be influenced by obtaining representations that enable meaningful knowledge transfer. Exploring other approaches, such as the method proposed in~\citep{mukhoti2023open}, may serve as a potential solution for enhancing semantic relevance between SS and TC tasks.}

\begin{figure}[ht]
\hspace{-0.7cm}
    \centering
    \begin{minipage}[b]{0.48\linewidth} 
        \centering
        \includegraphics[trim=0.15cm 0.25cm 0.2cm 0,clip,scale=0.5]{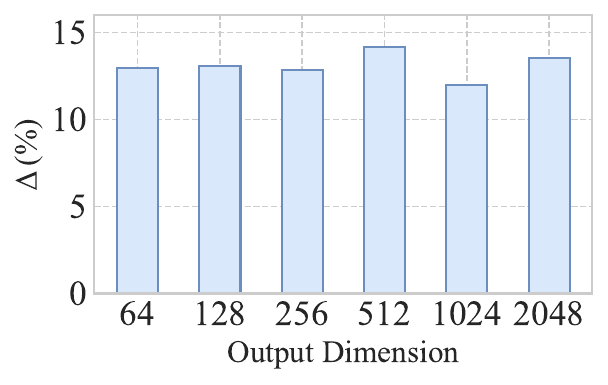}
        \\\hspace{0.3cm}{\small(a)} 
        \vspace{-1.5mm}
    \end{minipage}
    \hspace{-0.1cm}
    \begin{minipage}[b]{0.48\linewidth} 
        \centering
        \includegraphics[trim=0.15 0.25cm 0.2cm 0,clip,scale=0.506]{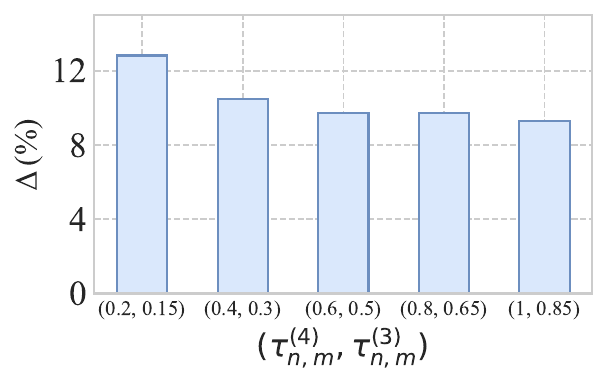} 
        \\\hspace{0.45cm}{{\small(b)}}
        \vspace{-1.5mm}
    \end{minipage}
\caption{Impact of (a) the output dimension of the representation models $d$ and {(b) the temperature parameters~{\small$\tau^{(4)}_{n,m}$} and {\small$\tau^{(3)}_{n,m}$}} on FedMuscle's performance. We set $E=150$, $R=1$, and $T=5$. The public dataset is derived from COCO.}
\label{fig:dim}
\vspace{-0.2cm}
\end{figure}


\begin{table}[t]
\caption{{Performance of FedMuscle in Setup2 when a large temperature parameter value (i.e., 100) is assigned between users with SS and TC tasks.}}
\vspace{-0.5cm}
\label{new_setup2}
\centering
\scalebox{0.56}
{\arrayrulecolor{black} {\begin{tabular}{c c c c c c}
\\ \toprule
\multicolumn{1}{c}{\multirow{3}{*}{\makecell{User \\ \#}}}
&\multicolumn{1}{c}{\multirow{3}{*}{Model}}
&\multicolumn{1}{c}{\multirow{3}{*}{Task}}
&\multicolumn{1}{c}{\multirow{3}{*}{\makecell{Eval.\\ Metric}}}
&\multicolumn{2}{c}{Algorithm} \\\cline{5-6}
& & &
&\multicolumn{1}{c}{\multirow{2}{*}{\makecell{FedMuscle\\ (Ours)}}}
&\multicolumn{1}{c}{\multirow{2}{*}{\makecell{Local \\ Training}}}\\
& & & & &
\\\midrule%
1 & ViT-Base & MLC & micro-F1 &47.90  & 42.00\\\cline{1-6}
2 & ViT-Small & MLC & micro-F1 &50.20  & 43.30\\\cline{1-6}
3 & ViT-Large & MLC & micro-F1  &49.20  & 42.40\\\cline{1-6}
 4 & ViT-Base & IC100 & Accuracy &31.70  & 24.90\\\cline{1-6}
5 & ViT-Small & IC100 & Accuracy &27.90  & 24.50\\\cline{1-6} 
 6 & ViT-Tiny  & IC10 & Accuracy &59.10  & 44.60\\\cline{1-6}
7 & SegFormer-B0  & SS & mIoU &33.00  & 32.90\\\cline{1-6}
8 & SegFormer-B1  & SS & mIoU &32.30  & 32.00         \\\cline{1-6}
9 & BERT-Base  & TC & Accuracy &44.20  &	40.10 \\\cline{1-6}
10 & DistilBERT-Base  & TC & Accuracy &55.80  &54.90 \\\cline{1-6}
\multicolumn{4}{c}{\cellcolor{lightgray!60} $\Delta$ (\%) $\uparrow$} &\cellcolor{lightgray!60}  \textbf{+13.3} &\cellcolor{lightgray!60} 0.00
\\ \bottomrule
\end{tabular}}}
\vspace{-0.4cm}
\end{table}


\section{Discussion on Possible Future Work Directions}
\label{sec: discussion}
\vspace{-0.2cm}
\paragraph{Multi-Modal Representation Learning} Although most existing self-supervised learning approaches have focused on single-modality inputs (e.g., images), performance can be enhanced by capturing the complementary information available across multiple modalities~\citep{sukei2024multi}. For example, in~\citep{radford2021learning}, image captions are considered as an additional modality and CLIP is proposed to learn a multi-modal representation space by jointly training a text and an image encoder. For two modalities, typical CL loss functions, such as InfoNCE, can be used. Multi-modal representation learning using image, text, and 3D point cloud modalities has recently been studied in~\citep{xue2024ulip}. However, they used the pairwise alignment approach, which cannot fully capture dependencies among the modalities. For more than two modalities, a systematic approach to learning a multi-modal representation space that effectively captures dependencies among modalities is still needed. Recently, \citet{cicchetti2024gramian} proposed a Gramian-based contrastive loss. Although their approach is effective and intuitive, it lacks theoretical justification and provides no mechanism to address negative transfer among modalities with low semantic relevance. While our primary motivation for deriving the Muscle loss function in~(\ref{eq:new_Loss}) was to learn a shared representation space across users’ models in FMTL, we believe that the Muscle loss can also enhance performance in multi-modal representation learning. In particular, we can define positives and negatives based on the available modalities and then effectively capture the dependencies among them using our proposed Muscle loss function.

\vspace{-0.2cm}
\paragraph{Performance Metric for Federated Multi-Task Learning} Users in an FMTL setting have different tasks with distinct performance metrics. Obtaining an overall performance metric across users is challenging due to task heterogeneity. In our experiments, we followed existing works~\citep{lu2024fedhca2,maninis2019attentive} and used the average per-user performance improvement relative to the local training baseline (i.e., $\Delta$) as the overall performance metric. However, we believe that this metric cannot fully capture the overall performance of an algorithm. This is because a specific algorithm may enhance the performance of only one user and degrade the performance of all other users, yet $\Delta$ may still be positive. For this reason, we presented detailed performance metrics for all users in our experiments. Developing a more informative overall performance evaluation metric for federated multi-task learning could be a valuable future research direction.
\vspace{-0.2cm}
\paragraph{Selection of Representation Matrices in FedMuscle} In Algorithm~\ref{alg: fedmuscle}, we randomly select $M$ representation matrices from other users to compute the aggregated matrix and the weighting coefficient vector for each user. Replacing this random selection with a systematic approach that provides more relevant knowledge based on each user's task and those of other users would be an interesting direction for future research.
\vspace{-0.2cm}
\paragraph{{Convergence Analysis of FedMuscle}} {In FedMuscle, there is no global model. Thus, new convergence definitions are needed to account for multiple objectives and heterogeneous tasks across users. These definitions may consider the convergence of each user's model individually or a single objective that encompasses all users’ models, similar to the Sheaf-FMTL formulation proposed by \cite{issaid2025tackling}. Furthermore, in FedMuscle, each user minimizes two losses: a task-specific loss using its local dataset and a CL loss using a shared public dataset. The convergence of the Muscle loss and its generalization properties should be studied. Overall, our observation is that the CL loss in FedMuscle can be viewed as a regularizer that guides users’ local models toward capturing common, task-agnostic information from other users’ models, thereby improving their generalization capability. Exploring these theoretical aspects would be an interesting direction for future work. We have provided more insights on FedMuscle's convergence in Appendix~\ref{sec: convergence}.}



\vspace{-0.2cm}
\section {Limitations}
\label{sec:lim}
\begin{itemize}
    \item FedMuscle relies on a public dataset. Legal and copyright issues related to the public data should be carefully considered in real-world applications. Similar to typical federated learning algorithms, where users adhere to the global model architecture and training guidelines imposed by the server, in FedMuscle, users should also adhere to the public dataset provided by the server.
    \item FedMuscle utilizes the Muscle loss on the representations from users' models. Since users have heterogeneous models and diverse tasks, it is important to identify which part of each model should serve as the representation model and which part should function as the task-specific prediction head. This distinction allows users to generate representations that are meaningful and suitable for CL, facilitating the alignment of representations across different users’ models.
    \item We considered multiple CV and NLP tasks in our experimental setup and provided a comparison with multi-modal federated learning algorithms based on their experimental settings. However, we believe that the experimental setup can be further expanded by incorporating more challenging tasks. We will continue to explore this direction in future work. 
\end{itemize}

\vspace{-0.2cm}
\section{{Discussion on Convergence of FedMuscle}}
\label{sec: convergence}

{In this section, we show that the Muscle loss in FedMuscle can be considered a regularizer. We then show that, with some simplifications, and for the special case of a linear representation setting, FedMuscle is optimizing an objective nearly identical to the Sheaf-FMTL algorithm~\citep{issaid2025tackling}. Note that the Sheaf-FMTL formulation provides a unified view of several existing FL and FMTL algorithms with guaranteed convergence bounds.}\par

{In FedMuscle, we can formulate the overall optimization problem encompassing all user's models as follows:
\begin{align} \label{eq:optimization}
\min_{\substack{\hspace{1cm}\bm{\theta}^{n} = \{\bm{w}^{n},\, \bm{\phi}^{n}\},\\n\in[N]}} \hspace{0.1cm} \sum_{n \in [N]}f^n(\bm{\theta}^{n}) + g^n(\bm{w}^{n}),
\end{align}
where we have $f^n(\bm{\theta}^{n}) = \mathbb{E}[\mathcal{L}^{n}(\bm{\theta}^{n})]$ and the expectation is taken over the local dataset $\mathcal{D}^n$ for each user $n \in [N]$. We also have $g^n(\bm{w}^{n}) = \mathbb{E}[\mathcal{L}_{\textrm{Muscle}}^{n}(\bm{z}^{n}_i)]$, where the expectation is taken over the public dataset $\mathcal{D}$. Note that $\bm{z}^{n}_i$ is obtained from the output of user $n$'s representation model. We have $\bm{z}^{n}_i=e^n(\bm{w}^{n},\,\bm{x}_i)$, where $e^n$ is user $n$'s representation model and $\bm{x}_i \in \mathcal{D}$. In the following, step-by-step, we simplify the expression $g^n(\bm{w}^{n})$.}\par

{Employing a simplified version of the contrastive loss has been used in previous studies to facilitate analysis~\citep{xue2023understanding, wang2021understanding}. Based on (\ref{eq:new_Loss}), we have 
\begin{align} \label{eq:con-simp}
g^n(\bm{w}^{n})&=-\mathbb{E}\log{\frac{\alpha_{(i,\dots,i)} \exp({ \bm{z}^{n}_i \cdot \sum_{m \in \mathcal{N}^n}\bm{z}^{m}_i / \tau^{(N)}_{n,m} })}{ \sum_{\bm{j} \in \mathcal{J}^n } {\alpha_{\bm{j}} \exp({ \bm{z}^{n}_i \cdot \sum_{m \in \mathcal{N}^n} \bm{z}^{m}_{j_m} / \tau^{(N)}_{n,m} })}}}\nonumber \\
&=-\mathbb{E}\log\alpha_{(i,\dots,i)}-\mathbb{E}[{ \bm{z}^{n}_i \cdot \sum_{m \in \mathcal{N}^n}\bm{z}^{m}_i / \tau^{(N)}_{n,m} }]\nonumber\\
&\hspace{0.35cm} + \mathbb{E}\log\Big({ \sum_{\bm{j} \in \mathcal{J}^n } {\alpha_{\bm{j}} \exp\big({ \bm{z}^{n}_i \cdot \sum_{m \in \mathcal{N}^n} \bm{z}^{m}_{j_m} / \tau^{(N)}_{n,m} }\big)}}\Big) \nonumber\\
&\stackrel{\textrm{(a)}}\thickapprox -\mathbb{E}\log\alpha_{(i,\dots,i)}-\mathbb{E}[{ \bm{z}^{n}_i \cdot \sum_{m \in \mathcal{N}^n}\bm{z}^{m}_i / \tau^{(N)}_{n,m} }] \nonumber\\
&\hspace{0.35cm} + \mathbb{E} \max_{\bm{j} \in \mathcal{J}^n}\Big({\log\alpha_{\bm{j}}+ \big({ \bm{z}^{n}_i \cdot \sum_{m \in \mathcal{N}^n} \bm{z}^{m}_{j_m}/ \tau^{(N)}_{n,m} }\big)}\Big),
\end{align}
where (a) follows from the log-sum-exp approximation. Let $\bm{j}^\textrm{max} \in \mathcal{J}^n$ be the index tuple that maximizes the last term in $(\ref{eq:con-simp})$. The optimization problem (\ref{eq:optimization}) is minimized by each user $n$ with respect to its local model parameters $\bm{\theta}^{n}$. Thus, in the provided simplified contrastive loss in (\ref{eq:con-simp}), $\mathbb{E}\log\alpha_{(i,\dots,i)}$ and $\mathbb{E}\log\alpha_{\bm{j}^\textrm{max}}$ are constants and can be ignored when optimizing (\ref{eq:optimization}). We have 
\begin{align} \label{eq:g_1}
g^n(\bm{w}^{n})&\thickapprox -\mathbb{E} [\bm{z}^{n}_i \cdot \sum_{m \in \mathcal{N}^n}(\bm{z}^{m}_i - \bm{z}^{m}_{j^{\textrm{max}}_m}) / \tau^{(N)}_{n,m}].
\end{align}
From (\ref{eq:g_1}), we observe that $g^n(\bm{w}^{n})$ serves as a regularizer in the optimization problem (\ref{eq:optimization}). We also observe that, by choosing large values for the temperature parameters {\small$\tau^{(N)}_{n,m}$}, the impact of this regularization term in (\ref{eq:optimization}) decreases. In the extreme case where the temperature parameters are set to $\infty$, (\ref{eq:optimization}) reduces to local training. In this case, each user optimizes its model solely using its local dataset without collaborating with other users.
}\par
{
Now, let us focus on a linear representation setting in which $e^n(\bm{w}^{n},\,\bm{x}_i)=\bm{W}^{n}\bm{x}_i$ for $\bm{W}^{n} \in \mathbb{R}^{d \times d^n}$, where $d^n$ denotes the input dimension of user $n$'s representation model based on its task. Note that focusing on a linear representation setting has been widely adopted in FL~\citep{collins2021exploiting} and CL~\citep{nakada2023understanding,xue2023understanding} analysis. We have $\bm{W}^{n}\bm{x}_i=(\bm{x}_i^T \otimes I_d)\bm{w}^{n}$, where $\otimes$ denotes the Kronecker product and $\bm{w}^{n} \in \mathbb{R}^{d.d^n}$ is the vectorized form of the parameters of user $n$'s representation model. Thus, we can rewrite $g^n(\bm{w}^{n})$ in (\ref{eq:g_1}) as follows:
\begin{align} \label{eq:g_2}
g^n(\bm{w}^{n})&\thickapprox -\mathbb{E} [(\bm{x}_i^T \otimes I_d)\bm{w}^{n} \cdot \sum_{m \in \mathcal{N}^n} \frac{1}{\tau^{(N)}_{n,m}}[(\bm{x}_i - \bm{x}_{j^{\textrm{max}}_m})^T\otimes I_d]\bm{w}^{m}].
\end{align}
By defining $\bm{p}^n = \mathbb{E} [\bm{x}_i^T \otimes I_d]$ and $\bm{p}^{n,m} = \mathbb{E} [(\bm{x}_i - \bm{x}_{j^{\textrm{max}}_m})^T\otimes I_d]$, we have 
\begin{align} \label{eq:g_3}
g^n(\bm{w}^{n})&\thickapprox - \bm{p}^n\bm{w}^{n} \cdot \sum_{m \in \mathcal{N}^n} \frac{1}{\tau^{(N)}_{n,m}}\bm{p}^{n,m}\bm{w}^{m}\nonumber \\
&= -\sum_{m \in \mathcal{N}^n}\frac{1}{\tau^{(N)}_{n,m}}{( \bm{p}^n\bm{w}^{n} \cdot \bm{p}^{n,m}\bm{w}^{m})}.
\end{align}
By combining (\ref{eq:optimization}) and (\ref{eq:g_3}), we have 
\begin{align} \label{eq:optim_modi}
\min_{\substack{\hspace{1cm}\bm{\theta}^{n} = \{\bm{w}^{n},\, \bm{\phi}^{n}\},\\n\in[N]}} \hspace{0.1cm} \sum_{n \in [N]}f^n(\bm{\theta}^{n}) -\sum_{n \in [N]}\sum_{m \in \mathcal{N}^n}\frac{1}{\tau^{(N)}_{n,m}}{( \bm{p}^n\bm{w}^{n} \cdot \bm{p}^{m,n}\bm{w}^{m})}.
\end{align}
Now, let us compare the optimization problem (\ref{eq:optim_modi}) with the Sheaf-FMTL optimization problem in~\citep{issaid2025tackling}. The Sheaf-FMTL optimization problem is defined as follows for a decentralized setting in which each user is able to collaborate with its neighboring users in $\mathcal{N}^n$~\citep{issaid2025tackling}:
\begin{align} \label{eq:Sheaf}
\min_{\substack{\hspace{1cm}\bm{\theta}^{n} = \{\bm{w}^{n},\, \bm{\phi}^{n}\},\\n\in[N]}} \hspace{0.1cm} \sum_{n \in [N]}f^n(\bm{\theta}^{n}) +\frac{\lambda}{2}\sum_{n \in [N]}\sum_{m \in \mathcal{N}^n}{\lVert \bm{P}^{n,m}\bm{\theta}^{n} - \bm{P}^{m,n}\bm{\theta}^{m}\rVert^2},
\end{align}
where $\bm{P}^{n,m}$ and $\bm{P}^{m,n}$ are learnable matrices and the term $\bm{P}^{n,m}\bm{\theta}^{n} - \bm{P}^{m,n}\bm{\theta}^{m}$ captures the discrepancy or dissimilarity between local models $\bm{\theta}^{n}$ and $\bm{\theta}^{m}$. We have $\lVert \bm{P}^{n,m}\bm{\theta}^{n} - \bm{P}^{m,n}\bm{\theta}^{m}\rVert^2=\lVert \bm{P}^{n,m}\bm{\theta}^{n}\rVert^2 + \lVert  \bm{P}^{m,n}\bm{\theta}^{m}\rVert^2 - 2 ( \bm{P}^{n,m}\bm{\theta}^{n} \cdot \bm{P}^{m,n}\bm{\theta}^{m})$. By ignoring the regularization terms $\lVert \bm{P}^{n,m}\bm{\theta}^{n}\rVert^2$ and $\lVert  \bm{P}^{m,n}\bm{\theta}^{m}\rVert^2$, the Sheaf-FMTL optimization problem can be reformulated as follows:
\begin{align} \label{eq:Sheaf-modi}
\min_{\substack{\hspace{1cm}\bm{\theta}^{n} = \{\bm{w}^{n},\, \bm{\phi}^{n}\},\\n\in[N]}} \hspace{0.1cm} \sum_{n \in [N]}f^n(\bm{\theta}^{n}) -\lambda\sum_{n \in [N]}\sum_{m \in \mathcal{N}^n}{\bm{P}^{n,m}\bm{\theta}^{n} \cdot \bm{P}^{m,n}\bm{\theta}^{m}}.
\end{align}
By comparing the simplified version of the FedMuscle optimization problem in~(\ref{eq:optim_modi}) with the Sheaf-FMTL optimization problem in~(\ref{eq:Sheaf-modi}), we observe that 
\begin{itemize}
\item Due to the similarities between these optimization problems, the convergence of FedMuscle would be comparable to that of the Sheaf-FMTL algorithm under the simplified assumptions considered.
\item Due to the use of a public dataset in FedMuscle, $\bm{p}^n$ and $\bm{p}^{m,n}$ can be obtained from the public dataset based on the hard negative term in the contrastive loss. Thus, unlike in the Sheaf-FMTL algorithm, where $\bm{P}^{n,m}$ and $\bm{P}^{m,n}$ are learnable matrices, $\bm{p}^n$ and $\bm{p}^{m,n}$ in FedMuscle are determined by the contrastive loss and the public data.
\item As in the Sheaf-FMTL algorithm, the term $-\bm{p}^n\bm{w}^{n} \cdot \bm{p}^{m,n}\bm{w}^{m}$ in (\ref{eq:optim_modi}) can be viewed as capturing the discrepancy or dissimilarity between $\bm{w}^{n}$ and $\bm{w}^{m}$. However, in FedMuscle, this term focuses on the representation models rather than on the full model parameters $\bm{\theta}^{n}$ and $\bm{\theta}^{m}$.
\item The Sheaf-FMTL algorithm uses a single hyperparameter $\lambda$ to control the impact of other users' models on each user’s local model training. However, in FedMuscle, the temperature parameters {\small$\tau^{(N)}_{n,m}$} arising from the contrastive loss control this impact between each pair of users and can be determined based on the semantic similarity among users' tasks.
\end{itemize}
}

\section {The Use of Large Language Models (LLMs)}
\label{sec: LLM}

We used LLMs to edit the paper, including grammar, spelling, and word choice. Thus, LLMs did not play a significant role in the research ideation or writing of this paper.

\end{document}